\documentclass{article}
\usepackage{iclr2027_conference}
\usepackage[T1]{fontenc}
\usepackage{newtxtext}

\usepackage{hyperref}
\usepackage{graphicx}
\usepackage{adjustbox}

\usepackage{caption}
\usepackage{subcaption}
\usepackage{multirow}
\usepackage{tcolorbox}
\tcbuselibrary{skins,breakable,listings}
\usepackage{listings}
\usepackage{xcolor}
\usepackage{wrapfig}   

\usepackage{algorithm}
\usepackage{algorithmic}
\usepackage{amsthm}
\usepackage{bbm}
\usepackage{amssymb}
\usepackage{amsmath}
\newtheorem{theorem}{Theorem}
\newtheorem{definition}{Definition}
\newtheorem{assumption}{Assumption}

\newtheorem{proposition}{Proposition}

\renewcommand{\thesection}{\arabic{section}}
\renewcommand{\thesubsection}{\thesection.\arabic{subsection}}
\usepackage{newfloat}
\usepackage{listings}
\DeclareCaptionStyle{ruled}{labelfont=normalfont,labelsep=colon,strut=off} 
\floatstyle{ruled}
\newfloat{listing}{tb}{lst}{}
\floatname{listing}{Listing}

\usepackage{booktabs}
\title{RoMeRL: Balancing Feedback Coverage and the Memory-Reward Trap
in Self-Evolving Agent Memory via Reduced-Order Utility States
}
\author{%
\parbox{0.98\textwidth}{%
\centering
{\normalfont\bfseries\small
Yi Yang\textsuperscript{1,*},
Zhennan Chen\textsuperscript{1,*,\ensuremath{\dagger}},
Yihong Zhuang\textsuperscript{2},
Tiehan Fan\textsuperscript{1},\\[2pt]
Yinan Chen\textsuperscript{3},
Jian Li\textsuperscript{1},
Jian Yang\textsuperscript{1},
Ying Tai\textsuperscript{1,\ensuremath{\ddagger}}
}\\[7pt]
{\normalfont\small
\textsuperscript{1}Nanjing University \quad
\textsuperscript{2}Xiamen University \quad
\textsuperscript{3}Zhejiang University
}
}}

\hypersetup{
  pdftitle={RoMeRL: Balancing Feedback Coverage and the Memory-Reward Trap in Self-Evolving Agent Memory via Reduced-Order Utility States},
  pdfauthor={Yi Yang, Zhennan Chen, Yihong Zhuang, Tiehan Fan, Yinan Chen, Jian Li, Jian Yang, Ying Tai}
}

\iclrfinalcopy

\begin{document}

\maketitle
\lhead{Preprint}
\begingroup
\renewcommand{\thefootnote}{\fnsymbol{footnote}}
\footnotetext[1]{Equal contribution.}
\footnotetext[2]{Project Leader.}
\footnotetext[3]{Corresponding Author.}
\endgroup

\begin{abstract}
Learning-based memory systems for self-evolving LLM agents face two
tightly coupled challenges. First, trajectory-indexed utilities grow
with the interaction history, thereby dispersing limited feedback over
an ever-expanding state space. Second, because trajectory-level rewards
are jointly assigned to co-retrieved memories, irrelevant experiences
may receive misleading utility updates and consequently enter the
\emph{memory-reward trap}. To address these challenges, we introduce
Reduced-Order Memory Reinforcement Learning (RoMeRL), which represents
the growing trajectory-indexed utility space using a fixed-dimensional
per-task memory state factorized by outcome polarity and memory
dynamics. RoMeRL incorporates new experiences through a fixed set of semantic
coordinates whose contents are updated or replaced over time, thereby
concentrating feedback over a bounded utility support. Theoretically, we show that this reduced-order parameterization increases
the average feedback received by each utility coordinate and characterize
the steady-state occupancy of erroneous coordinates under a generic
coordinate-transition model. Empirically, across LifelongAgentBench, ALFWorld, and AppWorld,
RoMeRL achieves the highest overall average score, reduces the Cold-Q ratio by
$80.0\%$, increases feedback density by approximately $6.0\times$,
reduces the maintained memory size by $84.4\%$, and cuts LLM calls by
$21.1\%$.  These results show that reduced-order utility states support efficient
self-evolving agent memory while limiting persistent reward
contamination. Code is available at: \url{https://github.com/YOUNG-fnxm/RoMeRL}
\end{abstract}

\section{Introduction}
Large language model (LLM) agents are inherently stateless, which limits their ability to accumulate and reuse experience across interactions \citep{sumers2023cognitive, tao2024survey}. Agent memory addresses this limitation by storing and retrieving past experience; recent surveys organize this design space by memory substrate,
cognitive role, operational lifecycle, and externalized agent infrastructure
\citep{zhang2025survey,huang2026rethinking,zhou2026externalization}. Early methods maintain episodic memories or reusable skill libraries and retrieve trajectories by semantic similarity \citep{zhong2024memorybank, cai2025flex, wang2023voyager}. Later systems introduce explicit memory lifecycles, using handcrafted workflows, reflection, summarization, and rule-based selection to organize and refine agent experience \citep{packer2023memgpt, fu2024autoguide, ouyang2025reasoningbank, zhao2024expel, wang2024agent}. Recent learning-based approaches instead optimize memory generation, retrieval, and utility from downstream task outcomes, allowing the memory system to evolve through interaction without updating the underlying LLM \citep{yan2026memory, zhou2025memento, zhang2025memevolve, zhang2026memrl}. The focus has consequently shifted from preserving past experience to deciding which experiences should remain active and affect future behavior.

Existing end-to-end memory optimization methods commonly assign a separate utility to every stored trajectory and update it from downstream task outcomes. As experience accumulates, this trajectory-indexed formulation continually expands the dimensionality of the learnable memory state while feedback remains limited, resulting in widespread utility cold start, concentrated updates, and low feedback density. A natural remedy is to increase exploration so that under-visited memories receive more feedback. However, our experiments with Upper Confidence Bound (UCB) show that, although stronger exploration improves memory coverage and alleviates cold start, it degrades task performance. Because trajectory-level rewards are jointly assigned to co-retrieved memories, broader exploration also exposes more weakly relevant memories to successful contexts, allowing them to receive positive updates without corresponding contributions. We refer to this exploration--contamination dilemma as the \textit{Memory-Reward Trap (MRT)}, illustrated in Figure~\ref{fig:memory_reward_trap1}.

\begin{wrapfigure}{r}{0.48\textwidth}   
    \centering
    \includegraphics[width=\linewidth]{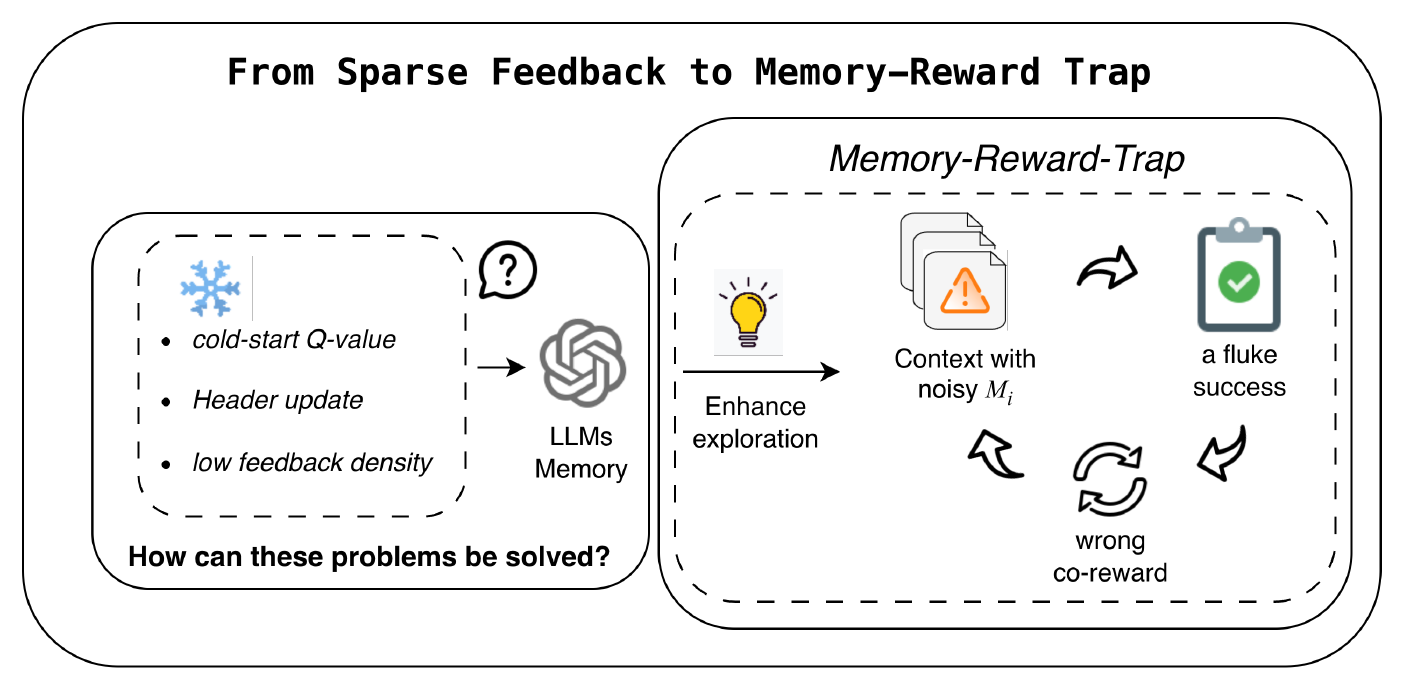}
    \caption{\textbf{\textit{Memory-reward trap}}: Stronger exploration retrieves more low-access-count memories, including weakly relevant, noisy, outdated, or failed experiences. When such a memory appears in a successful episode, the shared trajectory reward can promote it despite no causal contribution. Exploration therefore increases feedback coverage but also raises the risk of reward misattribution.}
    \label{fig:memory_reward_trap1}
\end{wrapfigure}

This conflict raises a more fundamental question:
\textbf{\textit{How can an agent improve memory-feedback coverage
without expanding the utility support exposed to erroneous reward
propagation?}}
We trace this dilemma to trajectory-indexed utility learning: every newly
stored experience introduces an additional utility variable, while
broader exploration exposes more such variables to potentially
misattributed rewards. To address this problem, we propose
Reduced-Order Memory Reinforcement Learning
(RoMeRL). Instead of exploring a continually expanding utility space more
aggressively, RoMeRL changes the state on which memory reinforcement
learning operates. It replaces trajectory-indexed utilities with a
fixed-dimensional per-task state factorized by outcome polarity and memory
dynamics. Outcome polarity separates positive from negative evidence;
memory dynamics separates consolidated historical evidence from adaptive
state-transition evidence. For each task, their Cartesian product defines a small set
of persistent semantic coordinates whose contents are updated or
replaced as new experiences arrive. The resulting state concentrates
feedback while limiting the dimensionality and persistence of support
exposed to the memory-reward trap.

We evaluate RoMeRL on LifelongAgentBench
\citep{zheng2025lifelongagentbench}, ALFWorld
\citep{shridhar2020alfworld}, and AppWorld
\citep{trivedi2024appworld}, which jointly cover operating-system and
database interaction, embodied household planning, and compositional
multi-application workflows. Across the ten evaluation units reported in
Table~\ref{tab:runtime_main_results}, RoMeRL achieves the highest overall
average score of $0.753$, outperforming the strongest baseline evaluated on
all three benchmarks by $2.9$ percentage points. On AppWorld, RoMeRL improves
SGC from $0.286$ to $0.326$ while remaining competitive in TGC
($0.306$ versus $0.313$). It also reduces the Cold-Q ratio by $80.0\%$,
increases feedback density by approximately $6.0\times$, decreases the
maintained memory size by $84.4\%$, and reduces LLM calls by $21.1\%$ without
updating the underlying LLM.

Our contributions are as follows:
\begin{itemize}
    \item We show that trajectory-indexed memory learning dilutes
    feedback as the utility state grows and expands the persistent
    support exposed to the memory-reward trap.

    \item We propose RoMeRL, which replaces each task's growing memory utility
    space with a fixed-dimensional state factorized by outcome polarity
    and memory dynamics.

    \item We characterize the balance between feedback and the memory-reward trap
      in reduced-order utility states, and evaluate RoMeRL's effectiveness
      and efficiency across terminal, embodied, and multi-application agent
      benchmarks.
\end{itemize}
\section{Related Work}
\paragraph{Workflow-Based Agent Memory.}
These systems build on retrieval-augmented inference, where external
evidence is selected by learned relevance models and supplied as model
context \citep{karpukhin2020dense,lewis2020retrieval}. Early agent-memory systems externalize interaction history through
predefined storage, reflection, and retrieval workflows. Generative
Agents and MemoryBank maintain episodic streams with importance-based
retrieval, reflection, and forgetting
mechanisms~\citep{park2023generative,zhong2024memorybank}. Reflexion and
ExpeL convert task feedback into verbal reflections or transferable
insights~\citep{shinn2023reflexion,zhao2024expel}, while Voyager and
AutoGuide distill interactions into reusable skills or
context-dependent guidelines~\citep{wang2023voyager,fu2024autoguide}.
MemGPT and HiAgent provide virtual or hierarchical memory
management~\citep{packer2023memgpt,hu2025hiagent}; A-MEM links structured
notes into an evolving knowledge network~\citep{xu2026amem}. These
methods avoid model tuning, but their memory decisions are largely
governed by handcrafted lifecycles, prompting rules, and semantic
similarity rather than downstream task outcomes.
\paragraph{Learning-Based Agent Memory.}
Recent work treats agent memory as a learnable component rather than a
static retrieval store. ReasoningBank and MemP distill trajectories into
evolving reasoning or procedural memories~\citep{ouyang2025reasoningbank,fang2026memp}.
Memory-R1, Agentic Memory, and AtomMem use reinforcement learning to
train memory construction, retrieval, update, and deletion
policies~\citep{yan2026memory,yu2026agentic,yao2026atommem}. Fine-Mem
assigns fine-grained rewards to individual memory operations
\citep{ma2026fine}, whereas MemEvolve adapts the memory architecture
across tasks~\citep{zhang2025memevolve}. MemRL, the closest setting to
ours, learns episodic-memory utilities through non-parametric runtime
reinforcement learning~\citep{zhang2026memrl}. These methods generally
optimize a growing collection of memory entries; RoMeRL
instead represents each task's utility-bearing state with a fixed number of
semantic coordinates.

\section{Preliminaries}

\paragraph{Reinforcement Learning in Agentic Memory.}

Outcome-driven agent memory methods associate each stored trajectory $m_i$ with a learnable utility $Q_{i,t}$ and use the utility together with semantic relevance for memory retrieval~\cite{salama2025meminsight,zhang2025learn,zhang2026memrl}. Let
$
\mathcal{M}_t=\{(m_i,Q_{i,t})\}_{i=1}^{N_t}
$
denote the memory bank at interaction step $t$, and let $\mathcal{S}_t\subseteq\mathcal{M}_t$ be the memories retrieved for the current query. After the agent completes the trajectory and receives task-level reward $r_t$, the utility of each retrieved memory is updated as
\begin{equation}
Q_{i,t+1}
=
Q_{i,t}
+
\alpha\,\mathbb{I}[m_i\in\mathcal{S}_t]
\bigl(r_t-Q_{i,t}\bigr),
\label{eq:memory_rl_update}
\end{equation}
where $\alpha$ is the learning rate. This formulation improves the agent by updating external memory utilities rather than the parameters of the underlying LLM. However, because every newly stored trajectory introduces an additional utility variable, the learnable memory state
$
\mathbf{Q}_t=(Q_{1,t},\ldots,Q_{N_t,t})\in\mathbb{R}^{N_t}
$
grows continuously with the interaction history.

\section{Reduced-Order Memory RL}
In this section, we formulate end-to-end memory reinforcement learning as
utility learning over a growing trajectory-indexed state. Bundle-level
rewards estimate observational rather than marginal utility, giving rise to
the \emph{memory-reward trap}, while reliable full-state estimation requires
feedback that grows with the number of stored trajectories. Together, these
properties create an exploration dilemma: broader coverage exposes more
utility variables to misattributed rewards. RoMeRL addresses both
problems by replacing each task's growing index set with a fixed-dimensional
state factorized by outcome polarity and memory dynamics. We first analyze
feedback concentration and erroneous-coordinate occupancy for a general
active dimension, and then present the practical implementation. The overall architecture of RoMeRL is shown in Figure~\ref{fig:ablation_combined}.
\begin{figure*}[htb]
    \centering
    \setlength{\abovecaptionskip}{6pt}
	\includegraphics[width=0.85\textwidth]{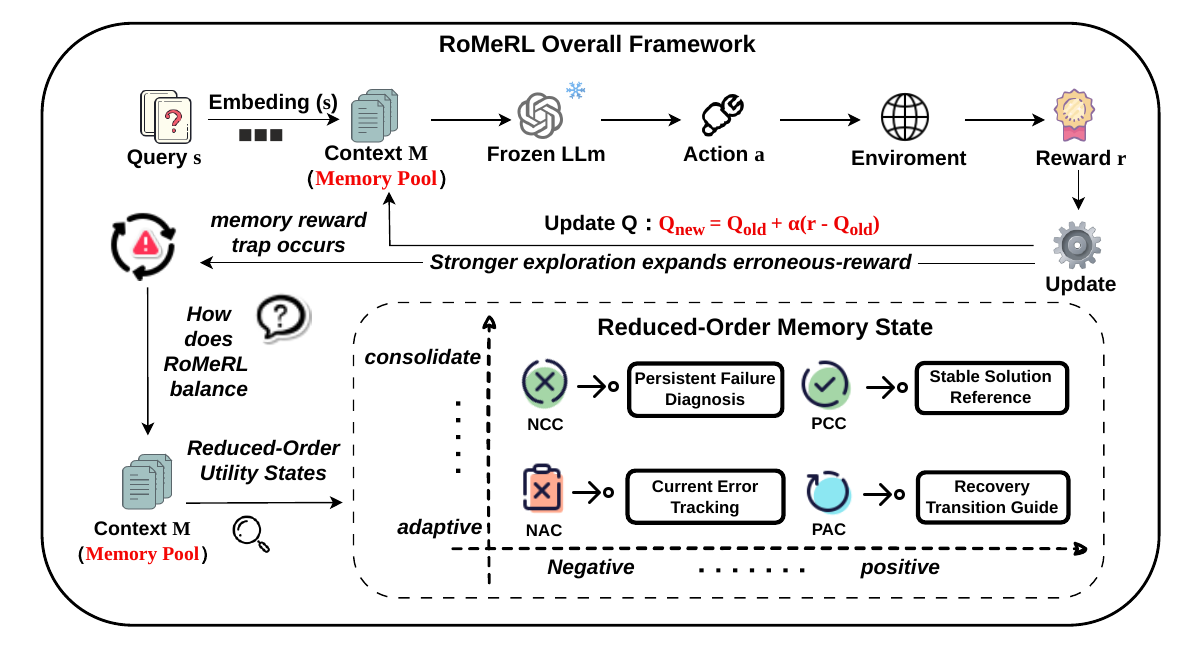}
    \caption{Overview of RoMeRL. \textbf{Top:} the agent retrieves memories to construct the LLM context and updates memory utilities using task-level rewards. Although stronger exploration improves feedback coverage, it also expands the utility support exposed to erroneous reward propagation, increasing the risk of the memory-reward trap.
\textbf{Bottom:} RoMeRL maps each task's growing trajectory-indexed utility space into a fixed-dimensional state factorized by outcome polarity and memory dynamics. Its four semantic coordinates PCC, PAC, NCC, and NAC are updated through
online retention, promotion, and replacement, thereby concentrating
feedback over a bounded active support.
}
    \label{fig:ablation_combined}
\end{figure*}
\subsection{Trajectory-Indexed Feedback and the Memory-Reward Trap}

Under the trajectory-indexed parameterization, each stored trajectory $m_i$ corresponds to an independently learned utility $Q_i$, whereas the task-level reward is jointly determined by the complete retrieved context $\mathcal{S}_t$. To distinguish the utility estimated from bundle-level outcomes from the individual contribution of a memory, we introduce the following definition.

\begin{definition}[Trajectory-Indexed Memory Utility]
\label{def:trajectory_indexed_utility}
Let
$
\small
n_i
=
\sum_t \mathbbm{1}[m_i\in\mathcal{S}_t],
\hat{Q}_i
=
\frac{1}{n_i}
\sum_{t:m_i\in\mathcal{S}_t}R_t,
\quad n_i>0,
$
denote the retrieval count and empirical utility of memory $m_i$.
Its observational utility is
$
\small
\mu_i
=
\mathbb{E}[R_t\mid m_i\in\mathcal{S}_t].
\label{eq:observational_utility}
$
Let the interventional outcomes with and without $m_i$ be
$
v_i^1
=
\mathbb{E}[R_t\mid\operatorname{do}(m_i\in\mathcal{S}_t)],
v_i^0
=
\mathbb{E}[R_t\mid\operatorname{do}(m_i\notin\mathcal{S}_t)],
$
where the task distribution and remaining retrieval context are fixed.
The marginal utility, observational attribution bias, and total credit gap
are respectively
\begin{equation}
\small
\begin{gathered}
\theta_i = v_i^1 - v_i^0,
a_i = \mu_i - v_i^1,\\
\mu_i - \theta_i = v_i^0 + a_i.
\end{gathered}
\label{eq:memory_feedback_bias}
\end{equation}
Here, $v_i^0$ is the task-level baseline, while $a_i$ captures attribution
induced by retrieval selection, co-retrieved memories, context interactions,
and task-level confounders.
\end{definition}

Because only bundle-level rewards are observed, $\hat{Q}_i$ estimates the
raw-return target $\mu_i$, rather than the marginal contribution $\theta_i$.
Their gap consists of the task-level baseline $v_i^0$ and the observational
attribution bias $a_i$.

We next characterize how the credit gap and the number of observations
jointly determine the accuracy of trajectory-indexed utility estimation.

\begin{theorem}[Gap--Variance Decomposition of Memory Utility]
\label{thm:bias_variance_memory}
Suppose that the rewards observed when $m_i$ is retrieved are
conditionally independent samples with mean $\mu_i$ and variance
$\sigma_i^2$:
$
\small
\mathbb{E}[R_t\mid m_i\in\mathcal{S}_t]
=
\mu_i,
\operatorname{Var}(R_t\mid m_i\in\mathcal{S}_t)
=
\sigma_i^2.
$
Then
\begin{equation}
\small
\begin{gathered}
\mathbb{E}\bigl[(\hat{Q}_i-v_i^1)^2\bigr] = a_i^2 + \frac{\sigma_i^2}{n_i},\\
\mathbb{E}\bigl[(\hat{Q}_i-\theta_i)^2\bigr] = (v_i^0+a_i)^2 + \frac{\sigma_i^2}{n_i}.
\end{gathered}
\label{eq:memory_bias_variance}
\end{equation}
\end{theorem}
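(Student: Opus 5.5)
The plan is to apply the standard bias--variance decomposition of mean squared error to the sample mean $\hat{Q}_i$, centred at two different targets. Throughout, I treat the retrieval count $n_i$ as fixed; equivalently, I condition on the retrieval pattern. Under that conditioning, the rewards $R_t$ with $m_i\in\mathcal{S}_t$ are the $n_i$ conditionally independent samples assumed in the statement, each with mean $\mu_i$ and variance $\sigma_i^2$. If $n_i$ is random, the identity holds conditionally on $n_i$, which is the form the statement is written in.

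\emph{Step 1 (moments of the estimator).} By linearity of expectation, $\mathbb{E}[\hat{Q}_i]=\frac{1}{n_i}\sum_{t:m_i\in\mathcal{S}_t}\mathbb{E}[R_t\mid m_i\in\mathcal{S}_t]=\mu_i$. By conditional independence the cross-covariances vanish, so $\operatorname{Var}(\hat{Q}_i)=\frac{1}{n_i^2}\cdot n_i\sigma_i^2=\sigma_i^2/n_i$.

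\emph{Step 2 (decomposition for an arbitrary constant target).} For any deterministic $c$, write $\hat{Q}_i-c=(\hat{Q}_i-\mu_i)+(\mu_i-c)$ and expand the square. The cross term $2(\mu_i-c)\,\mathbb{E}[\hat{Q}_i-\mu_i]$ vanishes by Step 1. This gives
\[
\mathbb{E}\bigl[(\hat{Q}_i-c)^2\bigr]=(\mu_i-c)^2+\frac{\sigma_i^2}{n_i}.
\]

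\emph{Step 3 (specialize the target).} The quantities $v_i^1$ and $\theta_i$ are interventional expectations defined with the task distribution and the remaining context fixed, so they are deterministic constants and Step 2 applies to them. Taking $c=v_i^1$ and using $a_i=\mu_i-v_i^1$ from Definition~\ref{def:trajectory_indexed_utility} gives the first identity. Taking $c=\theta_i$ and using $\mu_i-\theta_i=v_i^0+a_i$ from \eqref{eq:memory_feedback_bias} gives the second.

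The only real subtlety is justifying that the cross term vanishes and that $n_i$ may be treated as fixed. The retrieval events are selected adaptively, so without conditioning $\hat{Q}_i$ would be a randomly-stopped average. I would handle this explicitly by stating that all expectations are conditional on the retrieval indicator sequence, equivalently on $n_i$. The theorem's hypothesis that the rewards are conditionally independent with common mean and variance then makes the computation exact; the remaining steps are routine algebra.
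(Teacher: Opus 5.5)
Your proposal is correct and follows the paper's proof: both compute $\mathbb{E}[\hat Q_i]=\mu_i$ and $\operatorname{Var}(\hat Q_i)=\sigma_i^2/n_i$, add and subtract $\mu_i$ so the cross term vanishes, and substitute $a_i=\mu_i-v_i^1$ and $\mu_i-\theta_i=v_i^0+a_i$; your generic-constant step is the same computation, packaged more neatly. One minor overstatement: conditioning on the whole retrieval indicator sequence is not equivalent to conditioning on $n_i$, and under $Q$-dependent retrieval the stated hypothesis does not by itself guarantee that the selected rewards keep mean $\mu_i$ and independence given that sequence---the paper treats $n_i$ as fixed and explicitly flags this as an extra non-informative sampling condition that can fail under adaptive stopping.
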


\textit{Proof.}
See Appendix \ref{app:memory_proof}.

Theorem~\ref{thm:bias_variance_memory} separates statistical uncertainty
from two distinct sources of credit mismatch. Retrieving $m_i$ more often
increases $n_i$ and reduces the variance term, but it does not remove the
task-level baseline $v_i^0$ or the observational attribution bias $a_i$.
Additional feedback therefore makes $Q_i$ estimate the raw-return target
$\mu_i$ more precisely; it does not turn the raw-return estimate into the
memory's marginal contribution $\theta_i$.

The most harmful case is the \textbf{\textit{Memory-Reward Trap (MRT)}}:
a memory with non-positive marginal contribution exhibits a positive
observational signal relative to the matched no-memory baseline.

\begin{definition}[Memory-Reward Trap]
\label{def:memory_reward_trap}
A memory $m_i$ falls into a \emph{memory-reward trap} if its
interventional marginal utility is non-positive while its observational
raw return exceeds the interventional baseline without that memory:
\begin{equation}
\small
\theta_i
\leq
0,
\qquad
\mu_i-v_i^0
=
\theta_i+a_i
>
0.
\label{eq:memory_reward_trap}
\end{equation}
Thus, a memory with no positive marginal contribution may still have an
observational utility above the matched baseline due to retrieval selection,
co-retrieved memories, or context interactions. The Q-value estimates
$\mu_i$, while $\mu_i-v_i^0$ represents its observational excess over the
baseline.
\end{definition}

For the exponential update
$
\small
Q_{i,t+1}
=
(1-\alpha)Q_{i,t}
+
\alpha R_t,
\label{eq:ema_memory_update}
$
the expected change after retrieving $m_i$ is
$
\mathbb{E}
\left[
Q_{i,t+1}-Q_{i,t}
\mid
m_i\in\mathcal{S}_t
\right]
=
\alpha(\mu_i-Q_{i,t})
=
\alpha(v_i^0-Q_{i,t})
+
\alpha(\theta_i+a_i).
$
The two terms represent baseline drift and the observational signal above
that baseline, respectively. Under the MRT,
$\theta_i\leq0<\theta_i+a_i$, so repeated Q-based retrieval or retention
can reinforce a memory with non-positive marginal utility.

Let
$
\mathcal{B}_t
=
\left\{
m_i\in\mathcal{M}_t:
\theta_i\leq0,\ 
\mu_i-v_i^0>0
\right\},
\rho_t
=
\Pr(m_i\in\mathcal{B}_t).
$
Then
$
\mathbb{E}[|\mathcal{B}_t|]
=
N_t\rho_t,
$
so the expected support exposed to erroneous reward propagation grows
with the memory-state dimension $N_t$.

\section{Reduced-Order Memory State}
\label{sec:factorized_memory_state}

The preceding analysis separates raw-return estimation from marginal
credit assignment. We now consider a complementary question: how much
feedback is required to estimate raw-return utilities over a growing
trajectory-indexed state? As each stored trajectory introduces an additional utility variable, the dimensionality of this state grows continuously with the interaction history.
\begin{assumption}[Full-Pool Raw-Return Estimation Setting]
\label{assump:full_pool_estimation}
To isolate the effect of utility-state dimensionality, assume that each
stored memory has a stationary raw-return target $\mu_i$. The goal is to
estimate every stored target within error $\epsilon$ with probability at
least $1-\delta$:
\begin{equation}
\small
\Pr
\left(
\forall i\in[N_t],\,
|\hat{Q}_i-\mu_i|\leq\epsilon
\right)
\geq1-\delta.
\label{eq:uniform_full_pool_estimation}
\end{equation}
Let $F_T=\sum_{i=1}^{N_t}n_i$ be the total number of memory-level
updates. If each trajectory updates at most $k$ memories, then
$F_T\leq kT$.
\end{assumption}

This setting characterizes the feedback required to estimate a growing
collection of raw-return utilities. Their relation to the corresponding
marginal contributions is captured by the credit gap $v_i^0+a_i$.

\begin{theorem}[Sufficient Feedback Budget for Trajectory-Indexed Utilities]
\label{thm:full_pool_sample_complexity}
Suppose that rewards are bounded in $[0,1]$ and that each $\hat Q_i$ is estimated from $n_i$ independent samples
with mean $\mu_i$. By
Hoeffding's inequality and a union bound,
Equation~\eqref{eq:uniform_full_pool_estimation} holds if
$
n_i
\geq
\frac{1}{2\epsilon^2}
\log\frac{2N_t}{\delta}, \forall i\in[N_t].
$
Consequently, sufficient feedback and trajectory budgets scale as
\begin{equation}
\small
F_T
=
O\left(
\frac{N_t}{\epsilon^2}
\log\frac{N_t}{\delta}
\right),
\qquad
T
=
O\left(
\frac{N_t}{k\epsilon^2}
\log\frac{N_t}{\delta}
\right).
\label{eq:full_pool_feedback_complexity}
\end{equation}
\end{theorem}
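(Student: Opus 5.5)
The plan is the standard concentration-plus-union-bound argument, followed by summing the per-memory requirement over the pool.

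\textbf{Per-memory bound.} Fix $i\in[N_t]$. Since $\hat Q_i$ is the mean of $n_i$ independent rewards in $[0,1]$ with mean $\mu_i$, the two-sided Hoeffding inequality gives
\[
\Pr\bigl(|\hat Q_i-\mu_i|>\epsilon\bigr)\le 2\exp(-2n_i\epsilon^2).
\]
I will then choose $n_i$ so that this failure probability is at most $\delta/N_t$. Solving $2\exp(-2n_i\epsilon^2)\le\delta/N_t$ gives exactly the stated threshold
\[
n_i\ge\frac{1}{2\epsilon^2}\log\frac{2N_t}{\delta}.
\]

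\textbf{Uniform guarantee.} A union bound over the $N_t$ failure events then gives
\[
\Pr\bigl(\exists i:\,|\hat Q_i-\mu_i|>\epsilon\bigr)\le N_t\cdot\frac{\delta}{N_t}=\delta .
\]
This is Equation~\eqref{eq:uniform_full_pool_estimation}. The union bound needs no independence across different memories, so it does not matter that co-retrieved memories share rewards.

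\textbf{Budget scaling.} Summing the per-memory threshold gives
\[
F_T=\sum_i n_i=N_t\left\lceil \frac{1}{2\epsilon^2}\log\frac{2N_t}{\delta}\right\rceil .
\]
Since $\log(2N_t/\delta)\le 2\log(N_t/\delta)$ for $N_t/\delta\ge 2$, this is $O\bigl(\frac{N_t}{\epsilon^2}\log\frac{N_t}{\delta}\bigr)$. The ceiling adds at most $N_t$, which is absorbed into this term once $\epsilon\le 1$. Finally, Assumption~\ref{assump:full_pool_estimation} gives $F_T\le kT$. To meet the required per-memory counts, $T$ must therefore be at least $F_T/k$, and a schedule spreading the $k$ updates per trajectory across memories achieves this, giving the stated $T$ bound.

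\textbf{Main obstacle.} The main subtlety is interpretive rather than technical. The statement treats $n_i$ as fixed; if the counts arise from an adaptive retrieval policy, the samples are not independent with a deterministic $n_i$. I will handle this by conditioning on the counts, as the hypothesis of the theorem allows. If needed, I can instead appeal to a martingale (Azuma--Hoeffding) version with a deterministic schedule. I will also state explicitly that the result is a \emph{sufficient} budget (an upper bound), not a lower bound.
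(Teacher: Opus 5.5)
Your proposal is correct and matches the paper's proof essentially step for step. Both arguments use the two-sided Hoeffding bound per memory and a union bound that needs no independence across memories. Both then sum a common count $\lceil\frac{1}{2\epsilon^2}\log\frac{2N_t}{\delta}\rceil$ over the $N_t$ memories under a balanced allocation, schedule it in $\lceil F_T/k\rceil$ trajectories, and present the result as a sufficient budget rather than a lower bound. Your explicit condition $N_t/\delta\ge 2$ for absorbing the $\log 2$ term and the ceiling is slightly more careful than the paper's appeal to the regime $0<\epsilon\le 1$, $0<\delta<1$. That regime alone does not cover the corner $N_t=1$, $\delta\to 1$.
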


\noindent
\textit{Proof.}
See Appendix \ref{app:memory_proof}.

\paragraph{Relation to marginal contribution.}
Theorem~\ref{thm:full_pool_sample_complexity} concerns estimation of the
raw-return targets $\mu_i$. Relative to the marginal contributions
$
\small
|\hat Q_i-\theta_i|
\leq
|\hat Q_i-\mu_i|
+
|v_i^0+a_i|.
$
Let
$
G_t
=
\max_{i\in[N_t]}|v_i^0+a_i|
$
denote the maximum total credit gap. When $G_t<\epsilon$, the same
concentration argument gives a sufficient feedback budget after replacing
$\epsilon$ with $\epsilon-G_t$. Importantly, $G_t$ contains both the
task-level baseline $v_i^0$ and the observational attribution bias $a_i$;
additional feedback reduces estimation uncertainty around $\mu_i$ but
does not remove either component.

Theorem~\ref{thm:full_pool_sample_complexity} isolates the
dimension-dependent cost of full-pool estimation, while the credit-gap relation shows that broader exploration cannot remove the
task baseline or observational attribution component.

We therefore replace the $N_t$-dimensional trajectory-indexed utility space
with a fixed $2\times2$ state factorized by outcome polarity and memory
dynamics:
\begin{equation}
\small
\mathcal{O}
=
\{+,-\},
\qquad
\mathcal{D}
=
\{\mathrm{C},\mathrm{A}\},
\end{equation}
Here, $+$ and $-$ denote successful and failed evidence, while $\mathrm{C}$
and $\mathrm{A}$ denote consolidated and adaptive dynamics. Consolidated
coordinates retain globally selected evidence; adaptive coordinates track
the current state or meaningful transitions. Their Cartesian product gives
\begin{equation}
\small
\mathcal{I}^{\mathrm{fact}}
=
\mathcal{O}
\times
\mathcal{D},
\qquad
|\mathcal{I}^{\mathrm{fact}}|
=
4.
\label{eq:factorized_index_set}
\end{equation}
For task $g$, the variable-length history $D_{g,t}$ is mapped to
\begin{equation}
\small
\mathbf{Z}_{g,t}
=
\Phi(D_{g,t})
=
\left[
z_{g,t}^{o,d}
\right]_
{(o,d)\in\mathcal{O}\times\mathcal{D}}
=
\begin{bmatrix}
z_{g,t}^{+,\mathrm{C}}
&
z_{g,t}^{+,\mathrm{A}}
\\
z_{g,t}^{-,\mathrm{C}}
&
z_{g,t}^{-,\mathrm{A}}
\end{bmatrix}.
\label{eq:factorized_memory_state}
\end{equation}

Coordinate contents may change, but their semantic identities remain fixed. Each
new trajectory therefore updates or replaces a coordinate rather than adding
a persistent utility variable:
\begin{equation}
\small
\mathbf{Z}_{g,t+1}
=
\mathcal{U}
\left(
\mathbf{Z}_{g,t},
m_{t+1},
R_{t+1}
\right),
\qquad
\dim(\mathbf{Z}_{g,t})=4.
\label{eq:recursive_factorized_update}
\end{equation}

The $2\times2$ state is the smallest complete product of these binary
distinctions. The following results hold for a general active dimension $d$,
while $d=4$ is the design induced by this factorization. We next analyze its
feedback allocation.

\begin{theorem}[Feedback Concentration under State Reduction]
\label{thm:factorized_feedback_density}
Consider a utility-bearing memory state with dimension $d$. Under interaction budget $T$, per-trajectory update limit $k$, and approximately balanced feedback allocation, each utility coordinate receives
$
\bar{n}_d
\approx
\frac{kT}{d}
\label{eq:average_feedback_dimension}
$
feedback signals on average. Therefore, trajectory-indexed learning with dimension $N_t$ and factorized learning with dimension $4$ satisfy
$
\bar{n}_{\mathrm{fact}}
=
\frac{N_t}{4}
\bar{n}_{\mathrm{full}}.
\label{eq:factorized_feedback_ratio}
$
If the reward variance is bounded by $\sigma^2$, the corresponding average utility-estimation variance changes:
\begin{equation}
\small
\text{from}\quad
O
\left(
\frac{\sigma^2N_t}{kT}
\right)
\quad\text{to}\quad
O
\left(
\frac{4\sigma^2}{kT}
\right).
\label{eq:factorized_variance_reduction}
\end{equation}
\end{theorem}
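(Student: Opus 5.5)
The plan is a counting argument followed by the variance expression already derived in Theorem~\ref{thm:bias_variance_memory}. First, bound the total number of coordinate-level updates. Each of the $T$ trajectories updates at most $k$ utility coordinates, so the total feedback satisfies $F_T=\sum_{j=1}^{d} n_j\le kT$, exactly as in Assumption~\ref{assump:full_pool_estimation}. Interpret "approximately balanced feedback allocation" to mean that the counts $n_j$ differ from their mean by at most a constant factor, and that the budget is essentially saturated, $F_T\approx kT$. The mean count is then
\[
\bar n_d=\frac{1}{d}\sum_{j=1}^{d} n_j=\frac{F_T}{d}\approx\frac{kT}{d}.
\]
Setting $d=N_t$ and $d=4$ with the same $k$ and $T$ and dividing gives
\[
\frac{\bar n_{\mathrm{fact}}}{\bar n_{\mathrm{full}}}=\frac{kT/4}{kT/N_t}=\frac{N_t}{4},
\]
which is the stated ratio. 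Here $N_t$ is held fixed at the horizon considered. If $N_t$ grows during the run, evaluate at the final dimension; since the denominator only grows, this gives a bound in the favorable direction.

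For the variance, use the second display of Theorem~\ref{thm:bias_variance_memory}. Its variance term for coordinate $j$ is $\sigma_j^2/n_j\le\sigma^2/n_j$, assuming conditionally independent rewards within each coordinate. For the factorized state, apply the same result to each coordinate $z^{o,d}$, treating the rewards routed to that coordinate as i.i.d. with bounded variance. This is reasonable because the semantic identity of each coordinate is fixed even when its contents are replaced; the replacement effect I would absorb into a stationarity assumption. Under balance, $n_j\ge c\,kT/d$ for a constant $c$, so every coordinate has variance $O(\sigma^2 d/(kT))$, and therefore so does their average. Substituting $d=N_t$ and $d=4$ gives the two displayed rates.

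The main obstacle is the averaging step. Without the balance assumption, the average of $\sigma^2/n_j$ can be dominated by cold coordinates with $n_j$ small or zero. By Cauchy--Schwarz (equivalently, the AM--HM inequality), $\frac1d\sum_j 1/n_j\ge d/F_T$, so balance is exactly what makes this lower bound tight up to a constant. I would therefore state the result under the explicit condition $\min_j n_j\ge c\,F_T/d$. The role of $d=4$ is that this condition is easy to meet with a fixed, small number of coordinates, whereas for trajectory-indexed states it fails as $N_t$ grows.

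A second subtlety is that $n_j$ is random. I would condition on the realized counts, as Theorem~\ref{thm:bias_variance_memory} implicitly does, so that the variance formula holds conditionally, and then replace $n_j$ by its balanced lower bound. The resulting rates are order-of-magnitude statements consistent with the "$\approx$" and $O(\cdot)$ in the statement.
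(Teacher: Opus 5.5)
Your proposal is correct and follows essentially the same route as the paper. Both arguments take the average count $F_T/d$ with $F_T\le kT$, then bound each coordinate's variance by $\sigma^2/n_j$ using Theorem~\ref{thm:bias_variance_memory}, impose an explicit balance condition $n_j\ge cF_T/d$ to get $O(\sigma^2 d/(kT))$, and specialize to $d=N_t$ and $d=4$. The one small difference is that the paper obtains $\bar n_{\mathrm{fact}}=\frac{N_t}{4}\bar n_{\mathrm{full}}$ as an exact identity under a common realized budget $F_T$, needing neither balance nor saturation, whereas you divide the two $kT$ approximations.
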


\noindent
\textit{Proof.}
See Appendix \ref{app:memory_proof}.

Theorem~\ref{thm:factorized_feedback_density} shows that state reduction
concentrates a fixed feedback budget over fewer utilities, whereas
full-pool exploration only redistributes feedback across a growing set.

A smaller active state also limits simultaneous exposure to misleading
feedback, while its persistence depends on the contamination and
replacement dynamics characterized by the following proposition.
\begin{proposition}[Erroneous-Coordinate Occupancy under a Transition Model]
\label{thm:factorized_reward_exposure}
Let
$
\rho_{\mathrm{full}}
=
\Pr(\theta_i\leq0,\ \mu_i-v_i^0>0)
$
be the trap probability of a trajectory-indexed utility, so that the
expected number of trap-affected utilities in a full pool of size $N_t$
is $N_t\rho_{\mathrm{full}}$. Consider a generic active state of dimension
$d$ in which each coordinate follows stationary clean--erroneous
transitions. Suppose that the clean-to-erroneous transition probability is
at most $\gamma$ and the erroneous-to-clean transition probability is at
least $\lambda>0$. Then the steady-state erroneous fraction is at most
$\gamma/(\gamma+\lambda)$, and the expected number of erroneous active
coordinates is at most
$
\small
d\frac{\gamma}{\gamma+\lambda}.
\label{eq:factorized_active_exposure_bound}
$
This occupancy is lower than the expected full-pool exposure whenever
\begin{equation}
\small
d\frac{\gamma}{\gamma+\lambda}
<
N_t\rho_{\mathrm{full}}.
\label{eq:factorized_exposure_condition}
\end{equation}
\end{proposition}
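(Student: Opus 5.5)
We model each active coordinate $j\in[d]$ as a two-state Markov chain on $\{\mathrm{clean},\mathrm{err}\}$, and write $\pi_t^{(j)}=\Pr(\text{coordinate } j \text{ is erroneous at step } t)$. The transition probabilities may be time-varying or state-dependent, subject only to $\Pr(\mathrm{clean}\to\mathrm{err})\le\gamma$ and $\Pr(\mathrm{err}\to\mathrm{clean})\ge\lambda$. The plan has three steps: a one-step recursion for $\pi_t^{(j)}$ that contracts toward $\gamma/(\gamma+\lambda)$; a sum over coordinates using linearity of expectation; and a comparison with the full-pool count $N_t\rho_{\mathrm{full}}$ from Section~\ref{sec:factorized_memory_state}.

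\textbf{Step 1 (per-coordinate recursion).} By the law of total probability,
\[
\pi_{t+1}^{(j)} \le \gamma\bigl(1-\pi_t^{(j)}\bigr) + (1-\lambda)\pi_t^{(j)} = \gamma + (1-\gamma-\lambda)\pi_t^{(j)}.
\]
The inequality uses that the clean-to-erroneous probability is at most $\gamma$, and that the erroneous-stays-erroneous probability is at most $1-\lambda$.

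Let $\pi^\star=\gamma/(\gamma+\lambda)$. Then $\gamma+(1-\gamma-\lambda)\pi^\star=\pi^\star$, so subtracting gives
\[
\pi_{t+1}^{(j)}-\pi^\star \le (1-\gamma-\lambda)\bigl(\pi_t^{(j)}-\pi^\star\bigr).
\]
This is the delicate point. When $1-\gamma-\lambda\ge 0$, the affine map is monotone, and induction yields $\limsup_t \pi_t^{(j)}\le\pi^\star$ whenever $\lambda>0$. In the stationary case the argument is cleaner: a stationary distribution satisfies $\pi=\gamma'(1-\pi)+(1-\lambda')\pi$ with actual rates $\gamma'\le\gamma$ and $\lambda'\ge\lambda$. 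Hence
\[
\pi=\frac{\gamma'}{\gamma'+\lambda'}\le\frac{\gamma}{\gamma+\lambda},
\]
because $x/(x+y)$ is increasing in $x$ and decreasing in $y$. I will present this stationary argument as the main proof, since the proposition says ``stationary'', and add the contraction argument as a remark covering convergence.

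\textbf{Step 2 (aggregation).} Let $E_t=\sum_{j=1}^d \mathbbm{1}[\text{coordinate } j \text{ erroneous at } t]$. Linearity of expectation gives $\mathbb{E}[E_t]=\sum_j \pi^{(j)}\le d\gamma/(\gamma+\lambda)$. No independence across coordinates is needed, which matters because co-retrieved coordinates share the same trajectory reward.

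\textbf{Step 3 (comparison).} The full-pool exposure $\mathbb{E}[|\mathcal{B}_t|]=N_t\rho_t$ was already computed after Definition~\ref{def:memory_reward_trap}, with $\rho_t=\rho_{\mathrm{full}}$. Therefore Equation~\eqref{eq:factorized_exposure_condition} directly implies $\mathbb{E}[E_t]<\mathbb{E}[|\mathcal{B}_t|]$, and the comparison follows by chaining the inequalities.

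\textbf{Main obstacle.} The main difficulty is making the bound in Step 1 rigorous when the rates are only bounded rather than fixed. The concern is the sign of $1-\gamma-\lambda$, which decides whether the recursion is monotone. I would handle this by taking the bounds $\gamma$ and $\lambda$ at stationarity, using monotonicity of $x/(x+y)$ as above, so that no convergence argument is needed. Otherwise everything is routine bookkeeping.
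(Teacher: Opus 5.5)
Your proposal is correct and follows essentially the same route as the paper: the one-step bound $p_{j,t+1}\le(1-\lambda)p_{j,t}+\gamma(1-p_{j,t})$ evaluated at stationarity, then linearity of expectation over the $d$ coordinates with no independence assumption, then comparison with the full-pool count $N_t\rho_{\mathrm{full}}$. The only difference is cosmetic. The paper substitutes the stationary probability directly into that inequality to get $(\gamma+\lambda)p_j^\star\le\gamma$, which holds whatever the sign of $1-\gamma-\lambda$, so your ``main obstacle'' never arises; your detour through the exact ratio $\gamma'/(\gamma'+\lambda')$ and its monotonicity is equally valid.
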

\noindent
\textit{Proof.}
See Appendix \ref{app:memory_proof}.

Theorem~\ref{thm:factorized_feedback_density} characterizes feedback
concentration under a bounded active dimension. Proposition~
\ref{thm:factorized_reward_exposure} separates two factors governing
persistent erroneous occupancy: the active dimension $d$ controls the
maximum number of exposed coordinates, while $\gamma$ and $\lambda$
describe the contamination and correction dynamics of a generic
replacement process. For a four-coordinate state, the conditional
occupancy bound is obtained by setting $d=4$.

\subsection{Practical Implementation of RoMeRL}

We implement \textsc{RoMeRL} as an online memory state. For task $g$, let
$
\mathcal{D}_{g,t}
=
\{m_i\}_{i=1}^{n_{g,t}}
$
denote the trajectories observed by interaction $t$, where each $m_i$ has
outcome $y_i\in\{0,1\}$, efficiency $\ell_i$, temporal index $t_i$, and
utility $Q_i$. The factorized state is
\begin{equation}
\small
\mathbf{Z}_{g,t}
=
\begin{bmatrix}
z_{g,t}^{+,\mathrm{C}}
&
z_{g,t}^{+,\mathrm{A}}
\\
z_{g,t}^{-,\mathrm{C}}
&
z_{g,t}^{-,\mathrm{A}}
\end{bmatrix},
\qquad
z_{g,t}^{o,d}
=
\left(
m_{g,t}^{o,d},
Q_{g,t}^{o,d}
\right).
\label{eq:factorized_state_implementation}
\end{equation}
Each non-empty coordinate stores one representative and its utility,
defining the active retrieval support
\begin{equation}
\small
\mathcal{A}_{g,t}
=
\left\{
m_{g,t}^{o,d}
:
(o,d)\in\mathcal{O}\times\mathcal{D},
\;
m_{g,t}^{o,d}\neq\varnothing
\right\},
|\mathcal{A}_{g,t}|
\leq
4.
\label{eq:factorized_active_support}
\end{equation}

Each coordinate starts from $Q_{\mathrm{init}}$. An incoming
representative inherits the current utility as a warm start and resets
$n_i^{\mathrm{post}}$ to zero; subsequent outcomes adapt the inherited
value. For query $x_t$, RoMeRL ranks active memories by weighted
similarity and utility:
\begin{equation}
\small
\begin{split}
\operatorname{score}_t\!\left(m_{g,t}^{o,d}\right)
&=
(1-\omega_Q)
\cos\!\left(
e(x_t),
e\!\left(m_{g,t}^{o,d}\right)
\right)
+
\omega_Q Q_{g,t}^{o,d},
\\
\mathcal{S}_t
&=
\operatorname{TopK}_{k_{\mathrm{ret}}}
\left(
\mathcal{A}_{g,t};
\operatorname{score}_t
\right).
\end{split}
\label{eq:factorized_memory_retrieval}
\end{equation}
Here, $e(\cdot)$ is the embedding encoder and
$\omega_Q\in[0,1]$ weights the learned utility. All non-empty coordinates
are candidates, and the top $k_{\mathrm{ret}}$ memories are retrieved.

\paragraph{Positive Consolidated Coordinate (PCC).}
The coordinate $z_{g,t}^{+,\mathrm{C}}$ preserves a globally consolidated positive reference. Among successful trajectories, it retains the most efficient one:
\begin{equation}
\small
m_{g,t}^{+,\mathrm{C}}
=
\arg\min_{m_i\in\mathcal{D}_{g,t}:y_i=1}
\ell_i.
\label{eq:positive_consolidated}
\end{equation}
When a more efficient successful trajectory is observed, it replaces the
current representative and inherits the current PCC utility as a warm
start. This preserves the accumulated utility state of the consolidated
positive coordinate, while subsequent task-outcome updates adapt it to the
new representative.

\paragraph{Positive Adaptive Coordinate (PAC).}
The coordinate $z_{g,t}^{+,\mathrm{A}}$ records a positive transition from failure to success. Let
$
t_{g}^{\mathrm{fail}}
=
\min_{m_i\in\mathcal{D}_{g,t}:y_i=0}
t_i
\label{eq:first_failure_time}
$
be the first observed failure time. The adaptive positive coordinate retains the earliest successful trajectory following this failure:
\begin{equation}
\small
m_{g,t}^{+,\mathrm{A}}
=
\arg\min_{\substack{
m_i\in\mathcal{D}_{g,t}:y_i=1\\
t_i>t_g^{\mathrm{fail}}
}}
t_i.
\label{eq:positive_adaptive}
\end{equation}
Unlike the PCC, this coordinate is selected by temporal transition rather than global efficiency and therefore captures how the agent first crosses a failure-to-success boundary.

\paragraph{Negative Consolidated Coordinate (NCC).}
The coordinate $z_{g,t}^{-,\mathrm{C}}$ retains failed experience that has accumulated positive downstream utility evidence. A failed trajectory is eligible for consolidation only when its utility exceeds the negative initialization threshold $Q_{\mathrm{init}}^{-}$. Among eligible failures, the state retains the one with the highest utility:
\begin{equation}
\small
m_{g,t}^{-,\mathrm{C}}
=
\arg\max_{\substack{
m_i\in\mathcal{D}_{g,t}:y_i=0\\
Q_i>Q_{\mathrm{init}}^{-}
}}
Q_i.
\label{eq:negative_consolidated}
\end{equation}
Operationally, a trajectory occupying the negative--adaptive coordinate is promoted to this coordinate when its utility exceeds both $Q_{\mathrm{init}}^{-}$ and the utility of the current consolidated negative representative.

\paragraph{Negative Adaptive Coordinate (NAC).}
The coordinate $z_{g,t}^{-,\mathrm{A}}$ tracks the agent's current failure state by retaining the most recent failed trajectory:
\begin{equation}
\small
m_{g,t}^{-,\mathrm{A}}
=
\arg\max_{m_i\in\mathcal{D}_{g,t}:y_i=0}
t_i.
\label{eq:negative_adaptive}
\end{equation}
Each newly observed failure replaces the previous coordinate content. This temporal update makes the coordinate responsive to recent errors without allowing every failed trajectory to become a persistent utility variable.
\begin{table*}[htb]
\centering
\scriptsize
\setlength{\tabcolsep}{2.5pt}
\setlength{\abovecaptionskip}{6pt}
\caption{
\textbf{Main results over 10 epochs.}
We compare \textsc{RoMeRL} with non-learning and learning-based
agent-memory baselines. LAB reports \textbf{Last-Epoch SR / CSR}, while
ALFWorld reports SR for six task types: P\&P, examine, clean, heat, cool,
and Pick-2. AppWorld reports Task Goal Completion (TGC) and Scenario
Goal Completion (SGC). \textbf{Overall} is the macro-average over ten
evaluation units: last-epoch SR for the two LAB tasks, SR for the six
ALFWorld task types, and TGC and SGC for AppWorld.
}
\label{tab:runtime_main_results}
\resizebox{\textwidth}{!}{
\begin{tabular}{l cc cccccc cc c}
\toprule

\multirow{2}{*}{\textbf{Method}}
&
\multicolumn{2}{c}{\textbf{Lifelong Agent Bench}}
&
\multicolumn{6}{c}{\textbf{ALFWorld}}
&
\multicolumn{2}{c}{\textbf{AppWorld}}
&
\multirow{2}{*}{\textbf{\shortstack[c]{Overall\\Avg.}}}
\\

\cmidrule(lr){2-3}
\cmidrule(lr){4-9}
\cmidrule(lr){10-11}

&
\begin{tabular}{@{}c@{}}
\textbf{OS}\\
\textbf{Last / CSR}
\end{tabular}
&
\begin{tabular}{@{}c@{}}
\textbf{DB}\\
\textbf{Last / CSR}
\end{tabular}
&
\textbf{P\&P}
&
\textbf{Examine}
&
\textbf{Clean}
&
\textbf{Heat}
&
\textbf{Cool}
&
\textbf{Pick-2}
&
\textbf{TGC}
&
\textbf{SGC}
&
\\

\cmidrule(lr){1-12}

\textit{Model}
&
\texttt{DS-V4-flash}
&
\texttt{DS-V4-flash}
&
\multicolumn{6}{c}{\texttt{GPT-5.4-mini}}
&
\multicolumn{2}{c}{\texttt{GPT-5.6-luna}}
&
--
\\

\midrule

No Memory
&
0.646
&
0.550
&
0.883
&
0.827
&
0.861
&
0.850
&
0.855
&
0.788
&
0.238
&
0.224
&
0.672
\\

Pass@10
&
-- / 0.756
&
-- / 0.906
&
--
&
--
&
--
&
--
&
--
&
--
&
--
&
--
&
--
\\

RAG
&
0.700 / 0.752
&
0.556 / 0.844
&
0.891
&
0.834
&
0.868
&
0.855
&
0.858
&
0.796
&
0.170
&
0.143
&
0.667
\\

Mem0
&
0.691 / 0.733
&
0.575 / 0.841
&
0.897
&
0.841
&
0.873
&
0.858
&
0.872
&
0.805
&
0.215
&
0.224
&
0.685
\\

\midrule

MemP
&
0.768 / 0.796
&
0.631 / 0.942
&
0.899
&
0.847
&
0.871
&
0.862
&
0.875
&
0.810
&
0.245
&
0.204
&
0.701
\\

MemRL
&
0.808 / 0.820
&
0.632 / 0.934
&
0.908
&
0.855
&
0.887
&
\textbf{0.865}
&
0.871
&
0.812
&
\textbf{0.313}
&
0.286
&
0.724
\\

\midrule

\textit{RoMeRL (ours)}
&
\textbf{0.824} / \textbf{0.838}
&
\textbf{0.680} / \textbf{0.952}
&
\textbf{0.968}
&
\textbf{0.957}
&
\textbf{0.901}
&
0.862
&
\textbf{0.880}
&
\textbf{0.826}
&
0.306
&
\textbf{0.326}
&
\textbf{0.753}
\\

\bottomrule
\end{tabular}
}
\end{table*}
\paragraph{Runtime Utility Update.}
Given the retrieved set $\mathcal{S}_t\subseteq\mathcal{A}_{g,t}$ and the task-level outcome reward $r_t$, each retrieved coordinate is updated by
\begin{equation}
\small
Q_{g,t+1}^{o,d}
=
Q_{g,t}^{o,d}
+
\alpha\,
\mathbbm{1}
\left[
m_{g,t}^{o,d}\in\mathcal{S}_t
\right]
\left(
r_t-Q_{g,t}^{o,d}
\right).
\label{eq:factorized_utility_update}
\end{equation}
The updated utility is combined with semantic similarity for subsequent
retrieval. Suppose that a new representative $m_j$ inherits the initial
value $Q_{j,0}$ and has a stationary raw-return target $\mu_j$. After $s$
post-replacement utility updates,
\begin{equation}
\small
\mathbb{E}
\left[
Q_{j,s}-\mu_j
\right]
=
(1-\alpha)^s
\left(
Q_{j,0}-\mu_j
\right).
\label{eq:inherited_utility_adaptation}
\end{equation}
Thus, the inherited value provides a warm start, while subsequent
task-outcome feedback progressively adapts the utility to the new
representative. RoMeRL retains the standard Q update over a
fixed set of semantic coordinates.

\begin{figure*}[htb]
    \centering
    \begin{subfigure}{0.47\textwidth}
        \centering
        \includegraphics[width=\linewidth]{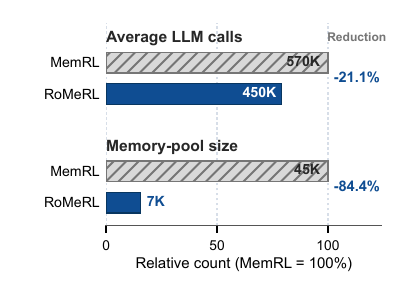}
        \caption{
            Efficiency comparison between MemRL and \textsc{RoMeRL}.
            We report the average number of LLM calls and the memory-pool size.
            The horizontal axis uses a logarithmic scale, with lower values
            indicating better efficiency.
        }
        \label{fig:efficiency_comparison}
    \end{subfigure}
    \hfill  
    \begin{subfigure}{0.47\textwidth}
        \centering
        \includegraphics[width=\linewidth]{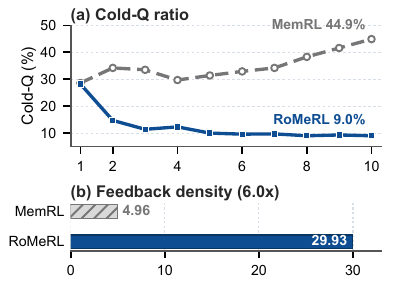}
        \caption{
            Feedback density and Cold-Q ratio on the OS task.
            The Cold-Q ratio is the fraction of current representatives with no
            direct utility update since admission, i.e.,
            $n_i^{\mathrm{post}}=0$, while feedback density is their average number
            of post-admission updates.
        }
        \label{fig:coldq_feedback_density}
    \end{subfigure}
    \caption{Performance and efficiency analysis.} 
    \label{fig:combined} 
\end{figure*}
\section{Experiments}
We compare RoMeRL with retrieval-based memory
(RAG~\citep{wu2020rag}, Mem0~\cite{chhikara2025mem0}), agentic memory (MemP~\citep{fang2026memp}),
test-time scaling (Pass@$k$), and runtime memory reinforcement learning
(MemRL~\citep{zhang2026memrl}), using frozen LLM backbones throughout.
The evaluation covers the OS and DB tasks of LifelongAgentBench, the six
task types of ALFWorld, and AppWorld. All methods within each benchmark use
the same backbone and interaction budget. For LifelongAgentBench, we report
last-epoch Success Rate (SR) and Cumulative Success Rate (CSR), defined as
the proportion of tasks solved at least once across epochs. For ALFWorld,
we report task-type SR, while AppWorld is evaluated using Task Goal
Completion (TGC) and Scenario Goal Completion (SGC). Appendix~B provides
additional details.
\subsection{Main Experiment}

Table~\ref{tab:runtime_main_results} shows that RoMeRL achieves the highest
overall average score of $0.753$, outperforming MemRL, the strongest
baseline evaluated on all three benchmarks, by $2.9$ percentage points
($0.724$). RoMeRL obtains the best last-epoch SR on both LifelongAgentBench
tasks and on five of the six ALFWorld task types. On AppWorld, it improves
SGC from $0.286$ to $0.326$, a gain of $4.0$ percentage points, while
remaining competitive in TGC ($0.306$ versus $0.313$). These results indicate
that the fixed-dimensional semantic-coordinate state transfers across
terminal, embodied, and multi-application environments while keeping the
LLM backbone frozen.

\paragraph{Feedback Utilization, MRT Robustness, and Efficiency.}
Figure~\ref{fig:coldq_feedback_density} reveals markedly different
feedback dynamics. MemRL's Cold-Q ratio increases from approximately
$29\%$ to $44.9\%$, whereas RoMeRL reduces this ratio from approximately
$28\%$ to $9.0\%$ while increasing feedback density from $4.96$ to
$29.93$ ($6.0\times$). As shown in
Figure~\ref{fig:efficiency_comparison}, this improved feedback
utilization is accompanied by substantially lower resource costs.
Compared with MemRL, RoMeRL reduces the average number of LLM calls by
$120$K, from $570$K to $450$K ($21.1\%$), and reduces the memory-pool
size by $38$K, from $45$K to $7$K ($84.4\%$). The MRT stress test in
Table~\ref{tab:mrt_noise_stress_test} further demonstrates the
robustness of this reduced-order representation. Adding UCB to MemRL
increases the number of positive noise updates from $3.7$ to $7.2$ and
the final noise ratio from $1.02\%$ to $1.20\%$. In contrast, RoMeRL
limits these quantities to $2.4$ and $0.15\%$, respectively, while
achieving the highest success rate of $82.0\%$. Additional details are
provided in Appendix~B.

\paragraph{Cross-Model Memory Transfer.}
Table~\ref{tab:cross_model_memory_transfer} shows the same pattern in
all four model--task combinations: transferring the frozen memory state
improves the score and reduces the average number of execution steps.
The factorized state therefore carries procedural information that is
useful across LLM backbones, both for solving more tasks and for reaching
solutions more directly. Because each interaction step typically
requires another LLM invocation, the lower step counts also reduce
inference cost.
\begin{table}[htb]
\centering
\small
\renewcommand{\arraystretch}{1.08}
\caption{Controlled MRT stress test on the OS task. The first-round memory
pool contains $10\%$ noisy entries, and results are reported after ten
training rounds. Positive Noise Updates denotes the average number of
positive utility updates received by noisy entries over the ten rounds,
whereas Final Noise Ratio denotes the percentage of noisy entries in the
memory pool at the end of round 10.}
\label{tab:mrt_noise_stress_test}
\begin{tabular}{lccc}
\toprule
\textbf{Method}
& \shortstack{\textbf{Round-10}\\\textbf{SR (\%)} $\uparrow$}
& \shortstack{\textbf{Positive Noise}\\\textbf{Updates} $\downarrow$}
& \shortstack{\textbf{Final Noise}\\\textbf{Ratio (\%)} $\downarrow$} \\
\midrule
MemRL       & 79.2 & 3.7 & 1.02 \\
MemRL + UCB & 78.4 & 7.2 & 1.20 \\
RoMeRL & \textbf{82.0} & \textbf{2.4} & \textbf{0.15} \\
\bottomrule
\end{tabular}
\end{table}
\begin{table}[htb]
\centering
\small
\renewcommand{\arraystretch}{1.08}
\caption{
Cross-model memory transfer on LifelongAgentBench OS and DB tasks. Results
with and without frozen transferred memory are reported as
\textbf{Validation Score / Average Steps} (higher / lower is better), and $\Delta$
denotes the absolute score gain over the base agent.
}
\label{tab:cross_model_memory_transfer}
\begin{tabular}{lccc}
\toprule
\textbf{Inference Model}
& \textbf{Base}
& \textbf{Transfer}
& \textbf{Gain ($\Delta$)}
\\
\midrule

\multicolumn{1}{c}{\textit{\textsc{LifelongAgentBench}--OS}}
\\
\addlinespace[1pt]

\texttt{GPT-5.4-mini}
& 67.0 / 3.23
& 81.6 / 2.22
& \textbf{+14.6 / -1.01}
\\

\texttt{Gemini-3.5-flash}
& 74.0 / 4.53
& 81.4 / 3.02
& \textbf{+7.4 / -1.51}
\\

\midrule

\multicolumn{1}{c}{\textit{\textsc{LifelongAgentBench}--DB}}
\\
\addlinespace[1pt]

\texttt{GPT-5.4-mini}
& 93.0 / 2.15
& 96.8 / 2.00
& \textbf{+3.8 / -0.15}
\\

\texttt{Gemini-3.5-flash}
& 96.2 / 2.44
& 97.6 / 2.18
& \textbf{+1.4 / -0.26}
\\

\bottomrule
\end{tabular}
\end{table}
\subsection{Ablation Study}

To assess the factorized state, we ablate NCC,
$z_{g,t}^{-,\mathrm C}$, and PAC, $z_{g,t}^{+,\mathrm A}$, on the OS task
while retaining PCC and NAC as the basic positive and negative anchors.
This isolates the complementary roles of consolidated negative evidence
and adaptive positive transitions. The results are shown in Figure 5.
\begin{wrapfigure}{r}
{0.47\columnwidth}   
    \centering
    \includegraphics[width=\linewidth]{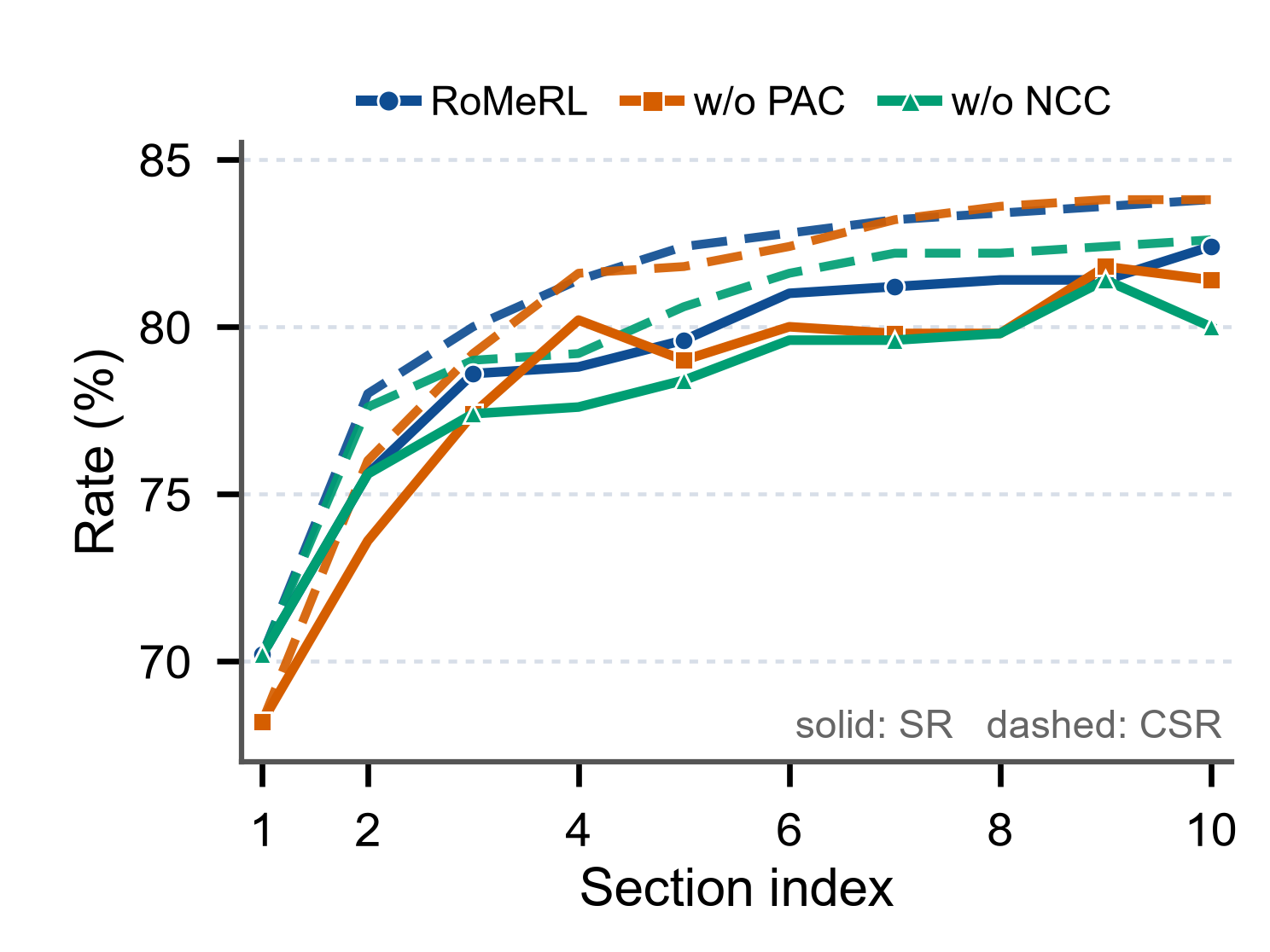}
    \caption{Coordinate ablation on the OS task. Solid and dashed
             curves denote SR and CSR.}
    \label{fig:ablation1}
    \vspace{-3em}
\end{wrapfigure}
\paragraph{Negative Consolidated Coordinate.}
Removing NCC lowers both Last-Epoch Success Rate and CSR
(Figure~\ref{fig:ablation1}). NCC retains failure-derived experiences with high downstream raw-return
utility, providing a compact source of potentially reusable negative
information, allowing the agent to reuse informative
negative evidence without storing every failed trajectory. In the full
factorized state, NCC accounts for $27.14\%$ of occupied coordinates on
OS and $42.78\%$ on DB. Its ablation and high
occupancy together indicate that consolidated negative evidence supports
both current performance and cumulative task coverage.
\paragraph{Positive Adaptive Coordinate.}
Removing PAC mainly reduces Last-Epoch Success Rate, with little change
in CSR. PAC preserves the first successful trajectory observed after a
failure, thereby retaining the recovery pattern. It occupies only $8.21\%$ of the coordinates on
OS and $5.05\%$ on DB, but its removal still
lowers current performance. PAC thus provides a sparse transition signal
that helps the agent reproduce previously discovered solutions. The NCC
and PAC results show complementary roles for consolidated negative
evidence and adaptive positive evidence.
\begin{figure}[htb]
	\centering
	\includegraphics[width=0.85\linewidth]{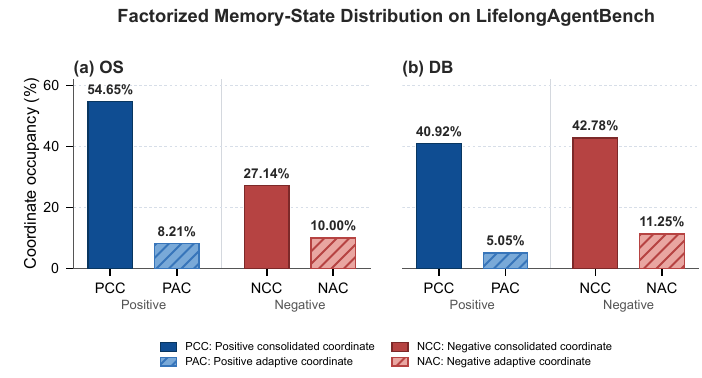}
	\caption{Occupancy distribution of the factorized memory state on the OS
and DB tasks.
The bars report the percentage of occupied active-memory coordinates
assigned to the PCC, PAC, NCC, and NAC.
}
	\label{fig:zhanbi}
\end{figure}
\section{Conclusion and Limitations}
We introduced RoMeRL, a reduced-order memory reinforcement learning
framework for self-evolving LLM agents. Our analysis identifies the MRT,
where outcome-driven Q updates reinforce co-retrieved memories with little
causal contribution. Exploration reduces estimation variance but not
attribution bias, while full-pool learning requires growing feedback.
RoMeRL instead uses compact per-task coordinates factorized by outcome
polarity and memory dynamics. Across LifelongAgentBench, ALFWorld, and
AppWorld, it achieves the highest overall average score while reducing MRT
exposure, memory size, and inference cost.

RoMeRL still relies on outcome-level rewards and therefore does not fully
resolve causal credit assignment. Moreover, estimating the
transition quantities $\gamma$ and $\lambda$ in
Proposition~\ref{thm:factorized_reward_exposure} requires coordinate-level
causal labels from paired counterfactual rollouts or equivalent attribution.
Future work may combine such attribution with finer-grained rewards and
evaluate open-ended, longer-horizon settings.
\newpage
\bibliographystyle{iclr2027_conference}
\bibliography{romerl}
\clearpage

\appendix

\setcounter{secnumdepth}{2}

\begin{center}
{\LARGE\bfseries Appendix}
\end{center}
\vspace{0.5em}

\section{Proofs for RoMeRL}
\label{app:memory_proof}

In this appendix, we provide detailed proofs for the theoretical results in
Sections 3 and 4. We first prove the bias-variance
decomposition of memory utility estimation, and then discuss its implication
for outcome-driven memory updates.

\subsection{Proof of Theorem~\ref{thm:bias_variance_memory}}
\label{app:proof_th1}

\begin{proof}
Fix a memory $m_i$ and a positive visit count $n_i$. Write the rewards observed
on its retrieval events as $R_{i,1},\ldots,R_{i,n_i}$. Under the assumptions of
Theorem~\ref{thm:bias_variance_memory}, these variables are independent, with
common mean $\mu_i$ and variance $\sigma_i^2$, and
\[
    \hat Q_i=\frac{1}{n_i}\sum_{j=1}^{n_i}R_{i,j}.
\]
Linearity of expectation and independence give
\begin{align*}
    \mathbb E[\hat Q_i]
    &= \frac{1}{n_i}\sum_{j=1}^{n_i}\mathbb E[R_{i,j}]
     = \mu_i, \\
    \operatorname{Var}(\hat Q_i)
    &= \frac{1}{n_i^2}\sum_{j=1}^{n_i}\operatorname{Var}(R_{i,j})
     = \frac{\sigma_i^2}{n_i}.
\end{align*}
Thus, $\hat Q_i$ is centered at the observational utility rather than the
interventional marginal utility. By definition, $a_i=\mu_i-v_i^1$ and
$\mu_i-\theta_i=v_i^0+a_i$. Adding and subtracting $\mu_i$ gives
\begin{align*}
\mathbb E[(\hat Q_i-v_i^1)^2]
&=
\mathbb E[(\hat Q_i-\mu_i)^2]
+
(\mu_i-v_i^1)^2
\\
&=
\frac{\sigma_i^2}{n_i}+a_i^2,
\\[2mm]
\mathbb E[(\hat Q_i-\theta_i)^2]
&=
\mathbb E[(\hat Q_i-\mu_i)^2]
+
(\mu_i-\theta_i)^2
\\
&=
\frac{\sigma_i^2}{n_i}
+
(v_i^0+a_i)^2.
\end{align*}
The cross terms vanish because
$\mathbb E[\hat Q_i-\mu_i]=0$.
\end{proof}

\paragraph{Random visit counts.}
The theorem treats $n_i$ as fixed. If the visit count is random, the same
identity holds conditionally provided that, given $n_i$, the selected rewards
retain the stated mean, variance, and independence properties. Under this
non-informative sampling condition and on the event $n_i>0$,
\[
\mathbb E[(\hat Q_i-\theta_i)^2\mid n_i]
=(v_i^0+a_i)^2+\frac{\sigma_i^2}{n_i},
\]
and averaging over $n_i$ replaces the variance term by
$\mathbb E[\sigma_i^2/n_i]$. This conditional step is not automatic under an
adaptive stopping rule that depends on previously observed rewards; such a
rule would require a separate martingale or optional-stopping analysis.

\paragraph{Why additional feedback does not remove the credit gap.}
For fixed $v_i^0$ and $a_i$, the variance term vanishes as $n_i$ grows,
whereas the total credit gap remains:
\[
\lim_{n_i\to\infty}
\mathbb E[(\hat Q_i-\theta_i)^2]
=
(v_i^0+a_i)^2.
\]
Moreover, $\hat Q_i\xrightarrow{p}\mu_i$. More feedback therefore makes
the empirical estimate increasingly precise around the raw-return target
$\mu_i$; it does not remove the task-level baseline $v_i^0$, the
observational attribution bias $a_i$, or their resulting gap to the
marginal contribution $\theta_i$.

\paragraph{Positive reinforcement under the memory-reward trap.}
Consider a trapped memory, for which
$\theta_i\leq0$ but $\mu_i-v_i^0=\theta_i+a_i>0$. Let
$\mathcal F_t$ be the pre-reward history and define
$\mu_{i,t}=\mathbb E[R_t\mid\mathcal F_t,m_i\in\mathcal S_t]$. Since
$Q_{i,t}$ is $\mathcal F_t$-measurable, the exponential update in
Equation~\eqref{eq:ema_memory_update} satisfies
\begin{equation}
    \mathbb E[Q_{i,t+1}-Q_{i,t}\mid
    \mathcal F_t,m_i\in\mathcal S_t]
    =\alpha(\mu_{i,t}-Q_{i,t}).
    \label{eq:appendix_ema_expected_drift}
\end{equation}
Under the stationary reference model $\mu_{i,t}=\mu_i$, the conditional
drift is positive whenever $Q_{i,t}<\mu_i$. This raw-return drift includes
the task baseline $v_i^0$. The MRT is not identified by the positivity of
$\mu_i$ alone, but by the sign reversal
$\theta_i\leq0<\mu_i-v_i^0$. If the retrieval score is nondecreasing in
$Q_i$, repeated raw-return updates can make such a memory more likely to
be retrieved again.

\paragraph{Fixed-step EMA variance.}
Equation~\eqref{eq:appendix_ema_expected_drift} is a statement about expected
drift, not a vanishing-variance guarantee. To see the distinction, suppose the
rewards at the $n$ update events are i.i.d. with mean $\mu_i$ and variance
$\sigma_i^2$, and let $Q_{i,0}$ be deterministic. Unrolling the recursion gives
\[
    Q_{i,n}=(1-\alpha)^nQ_{i,0}
    +\alpha\sum_{r=1}^{n}(1-\alpha)^{n-r}R_{i,r}.
\]
Consequently,
\begin{align*}
    \mathbb E[Q_{i,n}]
    &= (1-\alpha)^nQ_{i,0}
       +\bigl(1-(1-\alpha)^n\bigr)\mu_i
       \longrightarrow \mu_i, \\
    \operatorname{Var}(Q_{i,n})
    &= \alpha^2\sigma_i^2
       \sum_{r=0}^{n-1}(1-\alpha)^{2r} \\
    &= \frac{\alpha\sigma_i^2}{2-\alpha}
       \left(1-(1-\alpha)^{2n}\right)
       \longrightarrow \frac{\alpha\sigma_i^2}{2-\alpha}.
\end{align*}
Thus, a fixed-step runtime EMA approaches $\mu_i$ in mean but generally retains
a nonzero variance floor. The empirical-mean results in
Theorems~\ref{thm:bias_variance_memory} and
\ref{thm:factorized_feedback_density} concern a different estimator; the MRT
argument for the runtime update relies only on its expected drift toward the
observational target.

\subsection{Proof of Theorem~\ref{thm:full_pool_sample_complexity}}
\label{app:proof_full_pool_sample_complexity}

\begin{proof}
Write $N=N_t$ for the current full-pool dimension. For a fixed memory
$m_i$, Hoeffding's inequality gives
\begin{equation}
    \Pr\!\left(|\hat Q_i-\mu_i|
>\epsilon\right)
    \leq 2\exp(-2n_i\epsilon^2).
    \label{eq:hoeffding_single_memory}
\end{equation}
The estimates need not be independent across memories: applying the union
bound directly yields
\[
    \Pr\!\left(\max_{i\in[N_t]}|\hat Q_i-\mu_i|
>\epsilon\right)
    \leq \sum_{i=1}^{N_t}2\exp(-2n_i\epsilon^2).
\]
It is therefore enough to choose a common per-memory count
\begin{equation}
    n^\star
    =
    \left\lceil
    \frac{1}{2\epsilon^2}\log\frac{2N_t}{\delta}
    \right\rceil
    \quad\text{and ensure}\quad
    n_i\geq n^\star\ \text{for every }i.
    \label{eq:sample_per_memory}
\end{equation}
Substitution into the union bound gives a failure probability no larger than
$\delta$, which proves the simultaneous guarantee in
Equation~\eqref{eq:uniform_full_pool_estimation}.

It remains to translate the per-memory requirement into budgets. A balanced
allocation that assigns $n^\star$ feedback signals to each of the $N_t$
memories uses
\[
    F_T^\star=N_tn^\star
    \leq
    N_t\left(
    1+\frac{1}{2\epsilon^2}\log\frac{2N_t}{\delta}
    \right).
\]
Hence, in the nontrivial regime $0<\epsilon\leq1$ and $0<\delta<1$,
\[
    F_T^\star
    =
    O\!\left(
    \frac{N_t}{\epsilon^2}\log\frac{N_t}{\delta}
    \right).
\]
If at most $k$ memory-level updates are packed into one trajectory, the same
balanced allocation can be scheduled in
$T^\star=\lceil F_T^\star/k\rceil$ trajectories. Thus
\[
    T^\star
    =
    O\!\left(
    \frac{N_t}{k\epsilon^2}\log\frac{N_t}{\delta}
    \right),
\]
up to the immaterial final partially filled trajectory. These are sufficient
budgets, matching the statement of
Theorem~\ref{thm:full_pool_sample_complexity}; no minimax lower bound is
claimed.
\end{proof}

\subsection{Proof of Theorem~\ref{thm:factorized_feedback_density}}
\label{app:proof_factorized_feedback_density}

\begin{proof}
Let $D$ denote the number of utility coordinates being maintained, and let
$n_j$ be the number of feedback signals assigned to coordinate $j$. The
realized number of coordinate-level updates is
\[
    F_T=\sum_{j=1}^{D}n_j\leq kT.
\]
Irrespective of how these updates are distributed, the average count per
coordinate is exactly
\[
    \bar n_D=\frac{1}{D}\sum_{j=1}^{D}n_j=\frac{F_T}{D}.
\]
When the available update budget is used to constant order,
$F_T\asymp kT$, this becomes $\bar n_D\asymp kT/D$, which is the
approximation used in the theorem.

Now compare a trajectory-indexed state of dimension $N_t$ with a factorized
state of dimension four under the same realized feedback budget $F_T$. Their
average feedback counts satisfy
\begin{equation}
    \bar n_{\mathrm{fact}}
    =\frac{F_T}{4}
    =\frac{N_t}{4}\frac{F_T}{N_t}
    =\frac{N_t}{4}\bar n_{\mathrm{full}}.
    \label{eq:appendix_feedback_ratio}
\end{equation}
This identity concerns feedback density and does not require identical
coordinate-level visit counts.

The variance comparison does require the approximately balanced allocation
stated in the theorem. More precisely, suppose there is a constant
$c\in(0,1]$, independent of $D$ and $T$, such that
$n_j\geq cF_T/D$ for every maintained coordinate. If the conditional reward
variance of each coordinate is at most $\sigma^2$, then the empirical-mean
estimator from Theorem~\ref{thm:bias_variance_memory} obeys
\begin{align*}
    \frac{1}{D}\sum_{j=1}^{D}\operatorname{Var}(\hat Q_j)
    &\leq \frac{1}{D}\sum_{j=1}^{D}\frac{\sigma^2}{n_j} \\
    &\leq \frac{\sigma^2D}{cF_T}
     = O\!\left(\frac{\sigma^2D}{kT}\right),
\end{align*}
where the final form again uses $F_T\asymp kT$. Setting $D=N_t$ gives
$O(\sigma^2N_t/(kT))$ for the trajectory-indexed state, whereas setting
$D=4$ gives $O(4\sigma^2/(kT))$ for the factorized state.

If the update budget is not saturated, the same statements remain valid with
$F_T$ in place of $kT$. If feedback is highly unbalanced, the density identity
in Equation~\eqref{eq:appendix_feedback_ratio} still holds, but average
feedback alone no longer implies the stated variance bound. This is why the
balanced-allocation condition is explicit in the theorem.
\end{proof}

\subsection{Proof of Proposition~\ref{thm:factorized_reward_exposure}}
\label{app:proof_factorized_reward_exposure}

\begin{proof}
For the trajectory-indexed pool, define
$X_i=\mathbbm 1[\theta_i\leq0,\ \mu_i-v_i^0>0]$. By the definition of
$\rho_{\mathrm{full}}$, $\mathbb E[X_i]=\rho_{\mathrm{full}}$. Therefore,
without requiring independence among memories,
\begin{equation}
    \mathbb E\!\left[\sum_{i=1}^{N_t}X_i\right]
    =\sum_{i=1}^{N_t}\mathbb E[X_i]
    =N_t\rho_{\mathrm{full}}.
    \label{eq:appendix_full_pool_exposure}
\end{equation}

Now consider one coordinate of the active state. Let $p_{j,t}$ be the
probability that coordinate $j$ is erroneous after its $t$-th transition.
An erroneous coordinate remains erroneous with probability at most
$1-\lambda$, while a clean coordinate becomes erroneous with probability at
most $\gamma$. Hence
\begin{equation}
    p_{j,t+1}
    \leq (1-\lambda)p_{j,t}
       +\gamma(1-p_{j,t}).
    \label{eq:appendix_coordinate_transition}
\end{equation}
Under the stationary transition model in the theorem, write the stationary
erroneous probability as $p_j^\star$. Applying
Equation~\eqref{eq:appendix_coordinate_transition} at stationarity and
rearranging gives
\[
    (\gamma+\lambda)p_j^\star\leq\gamma,
    \qquad
    p_j^\star\leq\frac{\gamma}{\gamma+\lambda}.
\]

Let $Y_j$ indicate that active coordinate $j$ is erroneous in stationarity.
Linearity of expectation again avoids any independence requirement:
\begin{equation}
    \mathbb E\!\left[\sum_{j=1}^{d}Y_j\right]
    =\sum_{j=1}^{d}p_j^\star
    \leq d\frac{\gamma}{\gamma+\lambda}.
    \label{eq:appendix_active_exposure}
\end{equation}
Comparing Equations~\eqref{eq:appendix_full_pool_exposure} and
\eqref{eq:appendix_active_exposure} proves that the active state has strictly
smaller steady-state erroneous occupancy whenever
\[
    d\frac{\gamma}{\gamma+\lambda}
    <N_t\rho_{\mathrm{full}}.
\]
For a four-coordinate active state, setting $d=4$ gives the corresponding
conditional occupancy bound.

The replacement probability also controls persistence at the coordinate
level. Conditional on a coordinate being erroneous, the probability that it
remains erroneous for at least $r$ further transitions is at most
$(1-\lambda)^r$. Its expected erroneous residence time is therefore at most
$1/\lambda$. This residence-time bound is consistent with the stationary occupancy
result in Proposition~\ref{thm:factorized_reward_exposure}.
\end{proof}

\section{Implementation Details}
We facilitate reproducibility by documenting the exact model versions, hyperparameter settings, and environmental configurations used in our experiments.

\subsection{Model Specifications}
We performed all LLM reasoning and generation tasks using the models in Table~\ref{tab:model_specs}. To maximize reproducibility, we accessed them through the official APIs with a fixed temperature, ensuring deterministic outputs where feasible.

\begin{table}[htb]
    \centering
    \caption{Model and API Configurations.}
    \label{tab:model_specs}
    \begin{tabular}{l l l}
        \toprule
        \textbf{Component} & \textbf{Configuration / Version} & \textbf{Notes} \\
        \midrule
        \textbf{Backbone LLM} & \texttt{DS-V4-flash} & Used for LifelongAgentBench \\
                              & \texttt{GPT-5.4-mini} & Used for ALFWorld \\
                              & \texttt{GPT-5.6-luna} & Used for AppWorld \\
                              
        \midrule
        \textbf{Embedding Model} & \texttt{Text-Embedding-3-Large} & Used for Intent and Query encoding \\
        \midrule
        \textbf{Generation Params} & Temperature $= 0.0$ & General (Greedy decoding) \\
                                   
                                   & Top-p $= 1.0$ & Default \\
        \bottomrule
    \end{tabular}
\end{table}

\subsection{Hyperparameter Settings}

Table~\ref{tab:hyperparams} details the hyperparameters used for RoMeRL and
the baselines on LifelongAgentBench and ALFWorld. The similarity threshold
$\delta$ is adaptive to dataset density: for each reported setting, we
compute the pairwise cosine-similarity distribution of task descriptions and
select the top-$20\%$ quantile as the threshold. This retains only the most
relevant historical experiences during retrieval.

\begin{table}[t]
    \centering
    \caption{Hyperparameter settings for LifelongAgentBench and ALFWorld.}
    \label{tab:hyperparams}
    \resizebox{0.85\linewidth}{!}{%
    \begin{tabular}{l l c c c}
        \toprule
        & & \multicolumn{3}{c}{\textbf{Benchmark Setting}} \\
        \cmidrule(lr){3-5}
        \textbf{Parameter} & \textbf{Description} & \textbf{Lifelong Bench (OS)} & \textbf{Lifelong Bench (DB)} & \textbf{ALFWorld} \\
        \midrule
        \multicolumn{5}{l}{\textit{\textsc{RoMeRL} (Ours)}} \\
        $\alpha$ & Learning Rate & 0.3 & 0.3 & 0.3 \\
        $\omega_Q$ & Q-Weight Balance & 0.5 & 0.5 & 0.5 \\
        $\delta$ & Similarity Threshold & 0.50 & 0.37 & 0.62 \\
        $k_1$ & Cosine Similarity Recall Size & 10 & 10 & 5 \\
        $k_2$ & Final Memory Selection Size & 5 & 5 & 3 \\
        $Q_{init}$ & Initial Q-value & 0.5 & 0.5 & 0.0 \\
        \midrule
        \multicolumn{5}{l}{\textit{Baselines}} \\
        $k_{RAG}$ & Retrieval Top-k & 5 & 5 & 3 \\
        $k_{SelfRAG}$ & Retrieval Top-k & 5 & 5 & 3 \\
        $k_{MemP}$ & Retrieval Top-k & 5 & 5 & 3 \\
        \bottomrule
    \end{tabular}%
    }
\end{table}
\subsection{Data Partitioning}

Table~\ref{tab:data_split} summarizes the dataset partitions used across the
three benchmarks. For LifelongAgentBench, we consider both runtime-learning
and transfer-learning settings and use a fixed 7:3 random split with seed 42
for the OS and DB tasks. ALFWorld and AppWorld follow their official
benchmark splits and are evaluated only in the runtime-learning setting.

\begin{table}[htb]
    \centering
    \footnotesize
    \caption{
    Data splits across benchmarks. LifelongAgentBench uses a fixed random
    split with seed 42. ALFWorld and AppWorld follow their official benchmark
    splits; transfer learning is not evaluated for these two benchmarks.
    }
    \label{tab:data_split}
    \setlength{\tabcolsep}{5pt}
    \renewcommand{\arraystretch}{1.10}
    \resizebox{\linewidth}{!}{%
    \begin{tabular}{l c c l}
        \toprule
        \textbf{Benchmark}
        & \textbf{Runtime Learning}
        & \textbf{Transfer Learning}
        & \textbf{Split / Note} \\
        \midrule

        LifelongAgentBench (OS)
        & 500 tasks
        & 500 tasks
        & 7:3 random split (seed 42) \\

        LifelongAgentBench (DB)
        & 500 tasks
        & 500 tasks
        & 7:3 random split (seed 42) \\

        ALFWorld
        & \shortstack{3,553 train}
        & --
        & Official split (seed 42)\\

        AppWorld
        & \shortstack{105 train / 60 dev}
        & --
        & \shortstack{Official split (seed 42)} \\

        \bottomrule
    \end{tabular}%
    }
\end{table}

\subsection{Benchmark Details}
\label{app:benchmarks}
We evaluate performance across three benchmarks spanning operating-system
and database interaction, embodied household decision-making, and
compositional workflows over interconnected applications.

\paragraph{LifelongAgentBench (LAB)~\citep{zheng2025lifelongagentbench}:} LifelongAgentBench is designed to evaluate lifelong
learning and experience reuse in interactive terminal-based environments. It contains 1,396 total task instances
across three environments: Database (DB), Operating System (OS), and Knowledge Graph (KG). Following
prior work~\citep{zhang2026memrl}, we focus on the DB and OS subsets.
The DB subset (500 tasks) evaluates 22 SQL-related skills: basic SELECT, filtering (WHERE), grouping
(GROUP BY), sorting (ORDER BY), aggregation (COUNT / SUM / AVG / MAX / MIN), nested subqueries,
multi-table JOINs, set operations (UNION / INTERSECT), and data manipulation (INSERT / UPDATE / DELETE).
Execution results are verified automatically via SQL engine output.
The OS subset (500 tasks) evaluates 29 Bash-command skills: file and directory operations (ls, cp, mv, find),
permission management (chmod, chown), user and group management (useradd, groupmod), text processing
(grep, sed, awk, wc), compression (tar, gzip), process inspection (ps, top, kill), and system monitoring (df,
du, uptime). Correctness is verified by checking final OS state.

\paragraph{ALFWorld \citep{shridhar2020alfworld}:} A text-based embodied household environment aligned with the ALFRED simulator. It translates household manipulation tasks into textual observations and actions while preserving long-horizon planning and partial observability. The benchmark includes six task types: \textit{pick-and-place}, \textit{examine-in-light}, \textit{clean-and-place}, \textit{heat-and-place}, \textit{cool-and-place}, and \textit{pick-two-and-place}. These require agents to locate objects, navigate between receptacles, manipulate object states, and place objects at target locations. The original split provides 3,553 training tasks, 140 validation-seen tasks, and 134 validation-unseen tasks.

\paragraph{AppWorld \citep{trivedi2024appworld}:} AppWorld evaluates
interactive agents in a controllable ecosystem of interconnected
applications. Agents must complete compositional user requests through
multi-step application-API interactions while maintaining consistency across
the resulting workflow. We follow the official evaluation protocol and report
Task Goal Completion (TGC), which measures progress over individual task
goals, and Scenario Goal Completion (SGC), which requires all goals in a
scenario to be completed and therefore provides a stricter measure of
end-to-end success.

\subsection{Controlled MRT Stress Test}
\label{app:mrt_stress_test}

To examine reward contamination in a controlled setting, we replace
$10\%$ of the first-round memory entries with noisy versions. These
entries preserve their original titles but set the key action or
reflection field to \texttt{null}. This construction retains the semantic
cues used for retrieval while removing the actionable content, allowing
the noisy entries to remain retrievable and potentially receive positive
utility updates. Noise is injected only in the first round, after which all methods run
for ten rounds under the same task and interaction budgets. For the
MemRL+UCB variant, we set the UCB exploration coefficient to
$c_{\mathrm{UCB}}=0.2$ and the maximum per-memory exploration bonus to
$b_{\max}=0.3$. We report the round-10 success rate, the average number
of positive utility updates received by noisy entries, and the Final
Noise Ratio, defined as the fraction of noisy entries in the memory pool
at the end of round 10.

The Final Noise Ratio reflects both the persistence of the initially
injected entries and the propagation of noise during subsequent
interactions. In MemRL, we observe that exposure to a retrieved
\texttt{null} entry leads to one additional memory containing the same
\texttt{null} operation. The additional exploration induced by UCB
amplifies this effect, explaining its higher final noise ratio relative
to standard MemRL. In contrast, RoMeRL's replacement mechanism replaces
most noisy contents with higher-quality memories within the first three
rounds, leaving only a small fraction of noisy entries at the end of
training.

\subsection{Q-value stratification and feedback coverage.}
As shown in Figures~\ref{fig:combined}, RoMeRL produces a substantially more informative final Q-value landscape than the MemRL baseline. The association between the learned Q-values and memory-generation provenance is markedly stronger under RoMeRL, with the point-biserial Pearson correlation increasing from $r=0.493$ for the baseline to $r=0.673$ for RoMeRL. Under RoMeRL, the proportion of success-derived memories increases from only $2.3\%$ in the lowest Q-value bin ($0.0$--$0.2$) to $80.1\%$ in the highest bin ($0.9$--$1.0$), demonstrating that the Critic learns a meaningful ranking signal over the complete memory pool. Although the baseline reaches a higher absolute success-derived proportion of $94.7\%$ in its highest-Q bin, its memory pool already contains $77.3\%$ success-derived memories overall, corresponding to only a $1.23\times$ enrichment. In contrast, RoMeRL raises the success-derived proportion from a pool-wide prevalence of $33.3\%$ to $80.1\%$ in the highest-Q bin, yielding a substantially stronger $2.40\times$ enrichment. More importantly, only $5.0\%$ of RoMeRL memories remain at the initial $Q=0.5$, compared with $47.8\%$ for the baseline. This $42.8$-percentage-point reduction indicates that RoMeRL exposes a much larger fraction of the final memory pool to value feedback, thereby avoiding the large uninformative default-Q plateau observed under the baseline.

\subsection{Utility beyond binary success replay.}
The composition of the highest-Q bin further reveals a qualitative difference between the two methods. While the baseline's $0.9$--$1.0$ bin is almost entirely composed of success-derived memories, retaining only $5.3\%$ failure-derived memories, the corresponding RoMeRL bin retains approximately $19.9\%$ failure-derived memories. Consequently, RoMeRL's stronger correlation does not arise from simply copying the binary generation outcome into the Q-value: its high-Q region remains compositionally diverse while still being strongly enriched in success-derived memories. By itself, this compositional evidence shows that the learned Q-value is not a deterministic proxy for the success/failure label. Together with the concrete cases reported in Appendix~\ref{app:case}, where highly valued failure-derived memories contain reusable corrections, diagnostic information, or transferable procedural lessons, the result supports the interpretation that the Critic can recognize utility not reducible to binary episode outcomes. RoMeRL therefore provides a more expressive memory-retention signal than success-only replay: it combines substantially broader Q-update coverage with graded memory ranking, while preserving selected failure-derived experiences that may remain useful for future problem solving.

\begin{figure}[htb]
    \centering
    \begin{subfigure}[b]{0.48\linewidth}
        \centering
        \includegraphics[width=\linewidth]{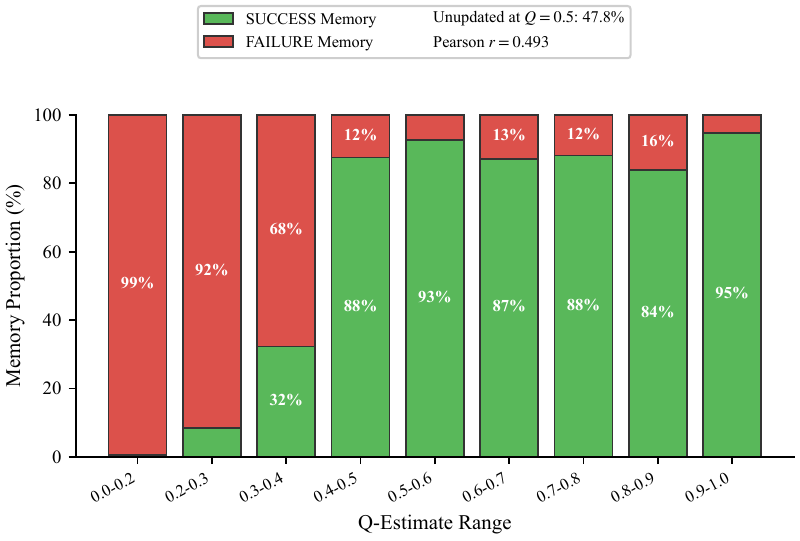}
        \label{fig:q_composition_baseline}
    \end{subfigure}
    \hfill
    \begin{subfigure}[b]{0.48\linewidth}
        \centering
        \includegraphics[width=\linewidth]{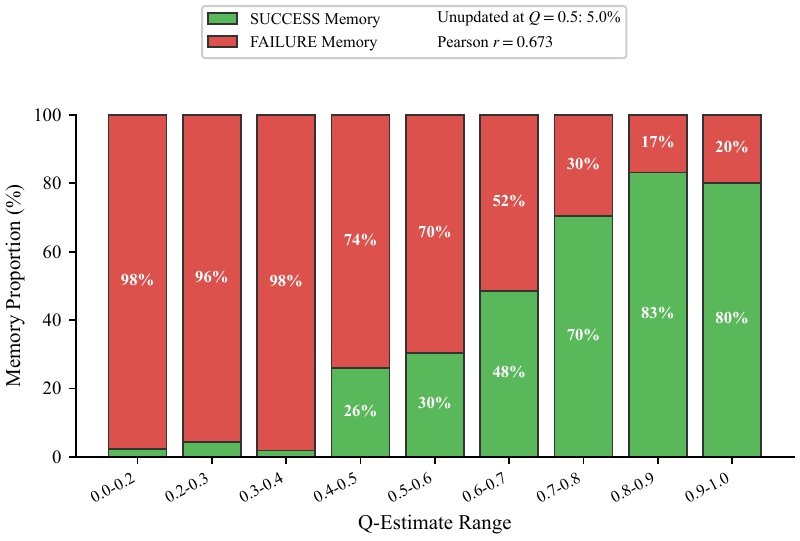}  
    \label{fig:q_composition_romerl}
    \end{subfigure}
    \caption{Final Q-value composition on OS tasks for the MemRL baseline (Left) and  RoMeRL (Right). Each bar shows the proportions of success-derived and failure-derived memories within a Q-value interval; the legends report the fraction of memories remaining at the initial $Q=0.5$ and the Pearson correlation between Q-values and memory-generation outcomes.}
    \label{fig:combined}
\end{figure}

\section{Additional ablation and comparative experiments}
The theoretical analysis in the main text motivates reducing the active
memory dimension, but does not by itself determine the semantic
composition of the reduced-order state. We therefore ground the
consolidated--adaptive distinction in the update, retention, and
replacement mechanisms of existing outcome-driven agent-memory systems.
Although these systems do not usually expose an explicit memory-horizon
parameter, their memory rules implicitly determine how long past evidence
continues to affect the active retrieval state. Such influence may persist
because a trajectory remains directly retrievable, because its feedback
has been accumulated into a learned utility, or because it determines the
representative retained by a semantic coordinate. Consequently, a state
selected from globally accumulated evidence generally has a longer
effective horizon than one selected to reflect recent failures or local
state transitions.

This distinction matches the intended roles of RoMeRL's consolidated and
adaptive coordinates. Consolidated coordinates are selected using evidence
accumulated over the interaction history and preserve their semantic
identity and learned utility when their representatives are updated,
whereas adaptive coordinates track recent failure states or meaningful
local transitions and respond more directly to changes in the current
interaction regime. We therefore use \(h\) as an analytical description
of the temporal influence induced by a memory rule, rather than as an
additional algorithmic hyperparameter; \(h\) characterizes the effective
temporal support of the evidence affecting the active state, not
necessarily the literal age of its current representative. Based on this
abstraction, the following toy study tests the prediction induced by the
incompatibility between the timescales required for rapid adaptation and
long-term retention.

\subsection{Toy Validation of the Consolidated--Adaptive Axis}

Let \(h\) denote this effective memory horizon. Consider an environment
containing short-lived regimes of duration \(L_{\mathrm{shift}}\), together
with a stable regime that disappears temporarily and returns after
\(H_{\mathrm{return}}\) interactions. To incorporate evidence from a
temporary regime before that regime ends, the memory must respond on a
sufficiently short timescale,
\begin{equation}
    h \lesssim L_{\mathrm{shift}}.
\end{equation}
In contrast, preserving the recurring stable solution until it becomes
relevant again requires
\begin{equation}
    h \gtrsim H_{\mathrm{return}}.
\end{equation}
When \(H_{\mathrm{return}}>L_{\mathrm{shift}}\), a single homogeneous
timescale cannot robustly satisfy both requirements. This incompatibility
motivates a short-horizon adaptive role and a long-horizon consolidated
role,
\begin{equation}
    h_A \lesssim L_{\mathrm{shift}},
    \qquad
    h_C \gtrsim H_{\mathrm{return}}.
\end{equation}
The recurrent non-stationary toy experiment is designed to test the
prediction induced by this argument: with the same two-slot memory
capacity, \(C+A\) should provide a better joint adaptation--retention
trade-off than the homogeneous \(C+C\) and \(A+A\) allocations.

We test this prediction using a recurrent non-stationary \(K\)-armed
bandit with \(K=10\). The optimal action follows the regime sequence
\begin{equation}
    a_0
    \rightarrow
    a_1
    \rightarrow
    a_2
    \rightarrow
    a_3
    \rightarrow
    a_0.
    \label{eq:toy-regime-sequence}
\end{equation}
The initial and final \(a_0\) regimes each last \(100\) interactions,
whereas each temporary regime lasts \(25\) interactions. Consequently,
the environment contains both a recurring stable component and a sequence
of short-lived changes. Selecting the optimal action produces a Bernoulli
reward with probability \(0.95\), while any other action succeeds with
probability \(0.05\).

All compared systems maintain exactly two memory slots and use the same
action-selection and exploration policy. They differ only in the temporal
roles assigned to the two slots. The \(C+C\) variant retains the two
actions with the strongest cumulative success evidence. The \(A+A\)
variant retains the two most recently successful distinct actions. The
\(C+A\) variant retains one action according to each rule: its
consolidated slot preserves the action with the strongest long-horizon
evidence, while its adaptive slot tracks the most recently successful
alternative. Because the total capacity is fixed, the comparison isolates
the allocation of memory dynamics rather than the number of stored
representatives.

Moreover, if each slot is restricted to either the consolidated or
adaptive role, the feasible allocations satisfy
\begin{equation}
    n_C+n_A=2,
    \qquad
    n_C,n_A\in\mathbb{Z}_{\geq 0},
\end{equation}
and are therefore exactly
\begin{equation}
    (n_C,n_A)\in\{(2,0),(1,1),(0,2)\},
\end{equation}
corresponding to \(C+C\), \(C+A\), and \(A+A\), respectively. The toy
study thus exhaustively compares the possible two-slot allocations within
this two-timescale hypothesis class.

\begin{figure}[htb]
	\centering
	\includegraphics[width=\linewidth]{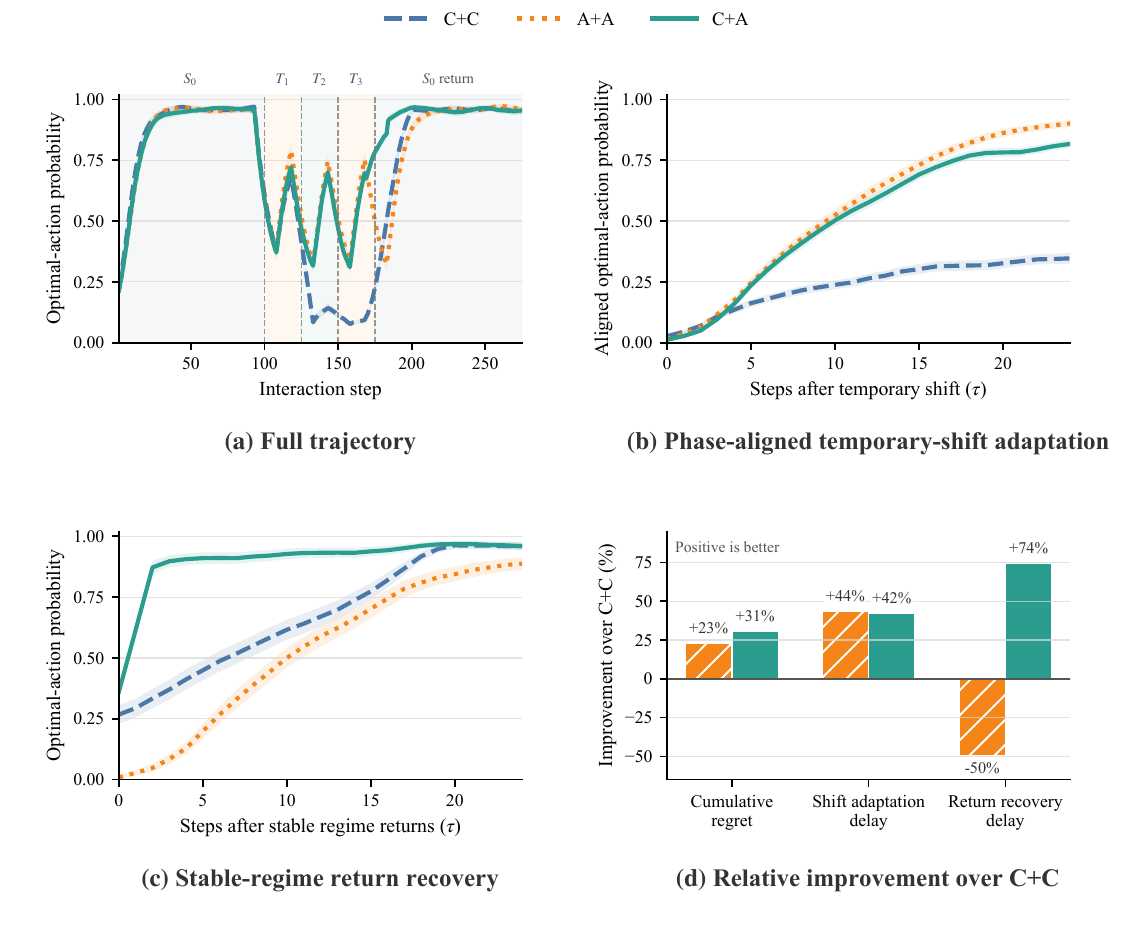}
    \caption{Controlled validation of the consolidated--adaptive distinction under
    equal memory capacity. All methods maintain exactly two slots.}
    \label{fig:toy-ca-stability-adaptation}
\end{figure}

\paragraph{Results.}
Figure~\ref{fig:toy-ca-stability-adaptation} reports the results over
\(500\) random seeds, with the curves showing the probability of selecting
the optimal action. Figure~\ref{fig:toy-ca-stability-adaptation}(a) shows
that \(C+C\) preserves the recurring stable action but adapts slowly during
the temporary regimes, whereas \(A+A\) and \(C+A\) respond substantially
faster. The phase-aligned curves in panel (b) confirm the adaptation
advantage of configurations containing adaptive coordinates. However,
when the original stable regime returns, panel (c) shows that \(A+A\)
recovers slowly because the stable action has been overwritten, while
\(C+A\) recovers almost immediately through its consolidated coordinate.

Panel (d) summarizes this trade-off relative to \(C+C\). The \(A+A\) and
\(C+A\) configurations reduce cumulative pseudo-regret by \(23\%\) and
\(31\%\), respectively, and reduce shift-adaptation delay by \(44\%\) and
\(42\%\). However, \(A+A\) increases return-recovery delay by \(50\%\),
whereas \(C+A\) reduces it by \(74\%\). Thus, within this controlled
two-slot setting, \(C+A\) provides the best joint balance between rapid
adaptation and long-term retention. This result motivates the
consolidated--adaptive distinction as a minimal two-timescale inductive
bias, rather than establishing it as a universally complete memory
factorization. Applying the distinction separately to successful and
failed evidence yields the RoMeRL state
\(\{+,-\}\times\{C,A\}\).

\subsection{Capacity-Matched Comparison with MemRL}
\label{app:capacity_matched_memrl}

To examine whether the improvements of RoMeRL can be explained solely by its bounded memory capacity, we construct a capacity-matched variant of MemRL, denoted as MemRL-4. Specifically, we restrict the memory pool of MemRL to at most four trajectory-indexed entries for each task, matching the maximum number of active coordinates maintained by RoMeRL. Before the capacity limit is reached, newly generated memories are inserted following the original MemRL procedure. Once the pool contains four memories, each newly generated memory replaces the existing entry with the smallest number of utility updates. This feedback-aware replacement rule preferentially removes under-updated historical memories and concentrates subsequent feedback on a bounded set of active entries, providing a stronger comparison than applying a capacity constraint alone.

Except for the memory-capacity constraint and the corresponding replacement rule, MemRL-4 retains the original trajectory-indexed utility representation, retrieval scoring function, and outcome-based utility update of MemRL. We use the same LLM backbone, task sequence, interaction budget, retrieval budget, and evaluation protocol as those used for RoMeRL. Therefore, the comparison between MemRL-4 and RoMeRL evaluates whether the observed gains arise merely from maintaining a small memory pool with feedback-aware replacement, or from the structured reduced-order state introduced by RoMeRL.
\begin{table}[htb]
    \centering
    \footnotesize
    \setlength{\tabcolsep}{3.5pt}
    \renewcommand{\arraystretch}{1.05}
    \caption{
    Capacity-matched results on LifelongAgentBench. Cold-Q is the percentage
    of retrievable memories receiving no utility update, and Feedback Density
    is the number of utility updates per retrievable memory.
    }
    \label{tab:capacity_matched_memrl}
    \resizebox{0.80\linewidth}{!}{%
    \begin{tabular}{l c c c c c}
        \toprule
        \textbf{Method}
        & \textbf{Task}
        & \textbf{Last SR} $\uparrow$
        & \textbf{CSR} $\uparrow$
        & \textbf{Cold-Q (\%)} $\downarrow$
        & \textbf{\shortstack{Feedback\\Density}} $\uparrow$ \\
        \midrule

        \multirow{2}{*}{MemRL-4}
        & OS
        & 0.806
        & 0.814
        & 27.05
        & 11.21 \\

        & DB
        & 0.645
        & 0.940
        & 19.60
        & 8.02 \\

        \midrule

        \multirow{2}{*}{\textit{RoMeRL (ours)}}
        & OS
        & \textbf{0.824}
        & \textbf{0.838}
        & \textbf{9.06}
        & \textbf{29.93} \\

        & DB
        & \textbf{0.680}
        & \textbf{0.952}
        & \textbf{9.73}
        & \textbf{16.07} \\

        \bottomrule
    \end{tabular}%
    }
\end{table}

Table~\ref{tab:capacity_matched_memrl} shows that RoMeRL consistently
outperforms MemRL-4 under the same four-item-per-task capacity. On the OS
task, RoMeRL improves the last-epoch SR from $0.806$ to $0.824$
($+1.8$ percentage points) and the CSR from $0.814$ to $0.838$
($+2.4$ percentage points). On the DB task, the corresponding metrics
increase from $0.645$ to $0.680$ ($+3.5$ percentage points) and from
$0.940$ to $0.952$ ($+1.2$ percentage points), respectively. The
differences are even more pronounced in feedback utilization. Compared
with MemRL-4, RoMeRL reduces the Cold-Q ratio from $27.05\%$ to $9.06\%$
on OS and from $19.60\%$ to $9.73\%$ on DB, corresponding to relative
reductions of $66.5\%$ and $50.4\%$. Meanwhile, its feedback density
increases from $11.21$ to $29.93$ on OS ($2.67\times$) and from $8.02$
to $16.07$ on DB ($2.00\times$). Since MemRL-4 already enforces the
same active-memory capacity and employs a feedback-aware replacement
rule, these results indicate that RoMeRL's improvements cannot be
explained by capacity control alone. Instead, they support the conclusion
that its factorized semantic-coordinate state provides an additional
benefit by concentrating utility feedback on a compact and consistently
updated representation.
\section{Cost and Efficiency Analysis}
\label{app:efficiency_analysis}

\subsection{Token Consumption}

We compare LLM-call cost across the full learning trajectory. Compared with
MemRL, RoMeRL consistently requires fewer runtime calls, as
shown in Figure~\ref{fig:llm_calls_by_section}. This improvement comes from
the factorized per-task memory state. Instead of retrieving and updating
memories from a continuously growing full pool, RoMeRL maintains a
compact active support for each task. As a result, the agent is less likely to
retrieve redundant, stale, or weakly relevant memories, and the prompt contains
more targeted experience for the current task.

This role-guided mechanism reduces cost in two ways. First, RoMeRL avoids generating procedural memories for ineffective trajectories that cannot be promoted or replaced in the role-specific slots, which naturally reduces the number of LLM calls. Second, by providing more reliable success, failure-diagnostic, and recovery information, RoMeRL helps the agent complete tasks in fewer interaction steps, which further reduces the total number of LLM calls. Therefore, the lower cost of RoMeRL is not merely a consequence of using fewer memories, but of replacing full-pool memory accumulation with a compact and role-structured active memory space.

On the LAB OS and DB tasks, RoMeRL achieves lower token consumption and fewer LLM calls than MemRL while maintaining stronger task performance. This indicates that role-guided memory replacement improves not only memory quality, but also the practical efficiency of non-parametric memory learning for autonomous agents.

\begin{figure*}[htb]
    \centering
    \setlength{\abovecaptionskip}{6pt}
    \begin{subfigure}[b]{0.48\textwidth}
        \centering
        \includegraphics[width=\linewidth]{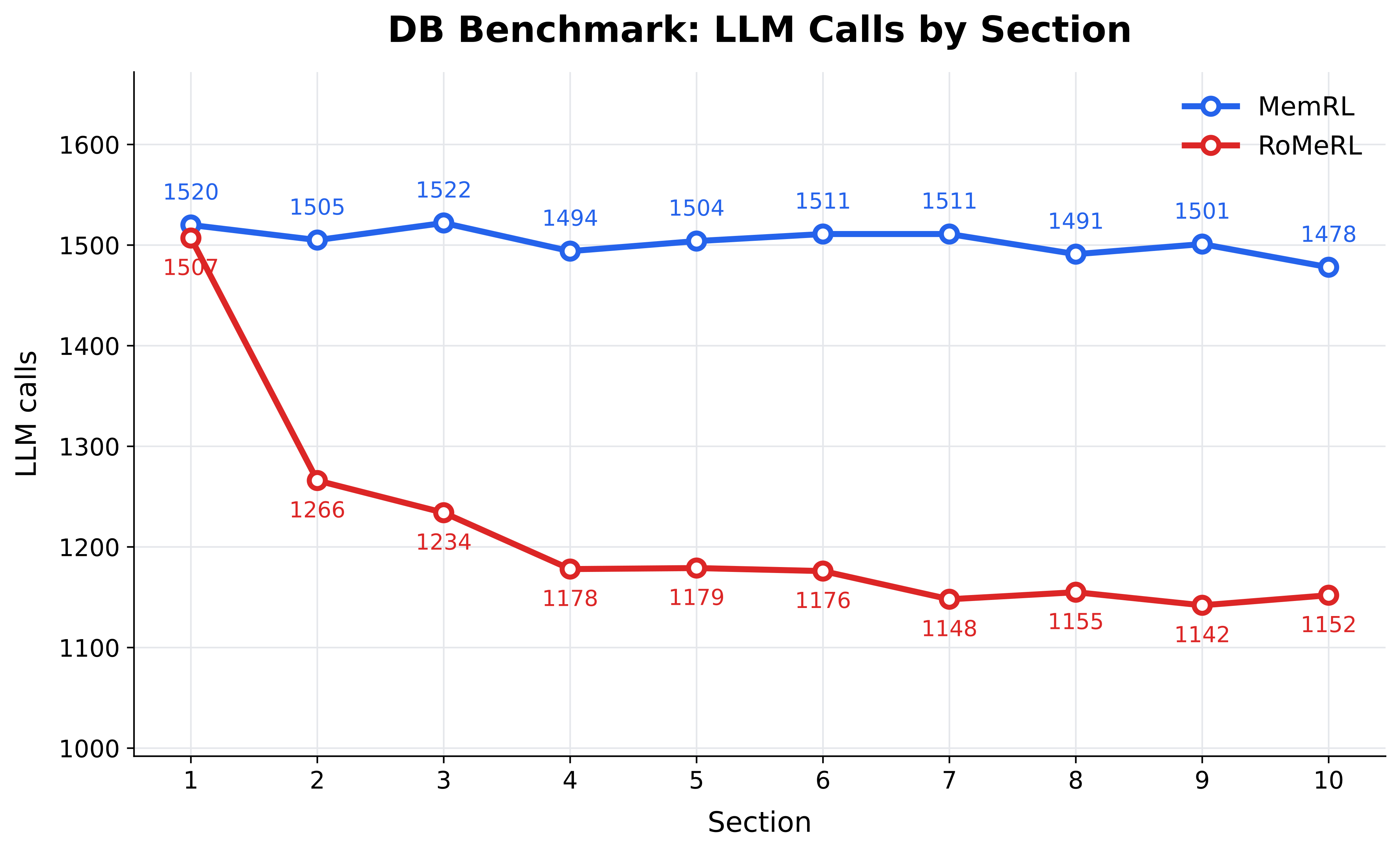}
        
        \label{fig:overall_left}
    \end{subfigure}
    \hfill
    \begin{subfigure}[b]{0.48\textwidth}
        \centering
        \includegraphics[width=\linewidth]{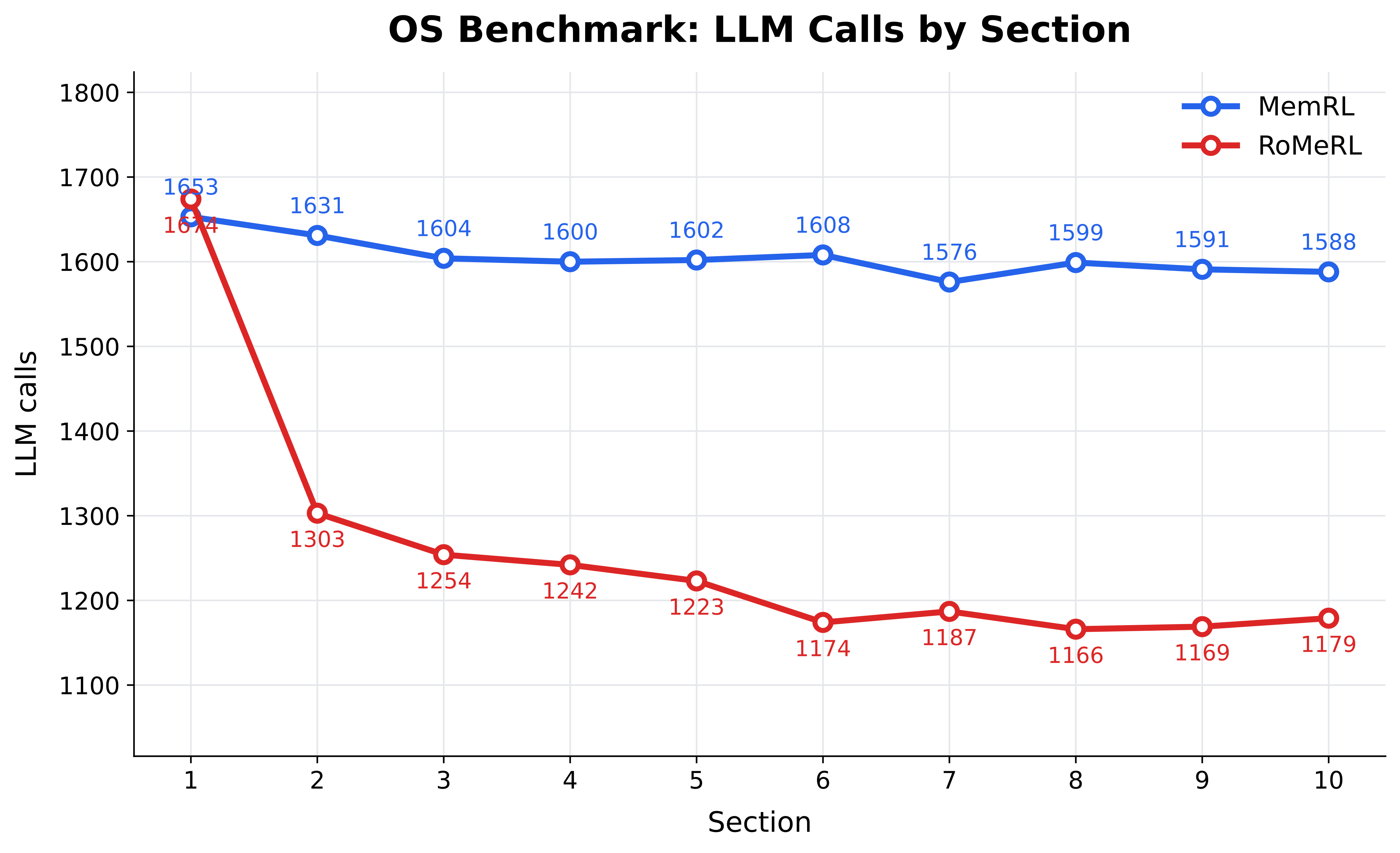}
        
        \label{fig:overall_right}
    \end{subfigure}
    \caption{LLM calls per section on the DB and OS task.
RoMeRL achieves lower and steadily decreasing call counts, while MemRL maintains a consistently high level.}
    \label{fig:llm_calls_by_section}
\end{figure*}

\newpage

\section{Case Study: Role-Based Memory Analysis}
\label{app:case}
This appendix presents qualitative case studies illustrating how RoMeRL's four semantic coordinates are selected and utilized during the 10-epoch OS-interaction run.
CS denotes \emph{Case Study}. Each box below contains the origin task, the retrieved coordinate memory, a brief explanation of its contribution, and the target task where it was retrieved.

\newtcblisting{CodeBox}{%
  enhanced,
  breakable,
  listing only,
  colback=gray!3,
  colframe=black!25,
  boxrule=0.4pt,
  arc=2pt,
  left=6pt,
  right=6pt,
  top=4pt,
  bottom=4pt,
  listing options={%
    basicstyle=\ttfamily\small,
    columns=fullflexible,
    keepspaces=true,
    breaklines=true,
    breakatwhitespace=false,
    showstringspaces=false,
    upquote=true,
    numbers=none,
  },
}

\tcbset{boxsep=1.0mm}

\newtcolorbox{CaseBox}[1]{%
  enhanced,
  breakable,
  colback=gray!5,
  colframe=black!40,
  colbacktitle=blue!15,
  coltitle=black,
  fonttitle=\bfseries\sffamily,
  title={#1},
  attach boxed title to top left={
    xshift=8pt,
    yshift=-8pt
  },
  boxed title style={%
    colback=blue!15,
    colframe=black!40,
    arc=2pt,
    boxrule=0.5pt,
  },
  boxrule=0.5pt,
  arc=3pt,
  left=6pt,
  right=6pt,
  top=10pt,
  bottom=6pt,
}

\begin{CaseBox}{Case Study 1 (\textbf{PCC} --- Stable reproduction of an efficient successful trajectory)}

\textbf{Coordinate:} \textbf{PCC}.

\par\medskip

\textbf{Origin task.}

\begin{CodeBox}
> Create a directory `/target` containing symbolic links named `link1` to `link5` pointing to `/source/file1` to `/source/file5` respectively. Ensure the directory is owned by group `testgroup` with permissions 775.
\end{CodeBox}

\medskip

\textbf{Retrieved success memory.}

\begin{CodeBox}
- #1: [MEMORY TYPE] PROCEDURE
Task: Create a directory '/target' containing symbolic links named 'link1'
to 'link5' pointing to '/source/file1' to '/source/file5' respectively.
Ensure the directory is owned by group 'testgroup' with permissions 775.

SCRIPT:
### High-Level Script

1. Ensure the target group exists.
2. Create the target directory with `mkdir -p`.
3. Set group ownership and permissions with `chgrp` and `chmod`.
4. Create the five symbolic links using absolute source paths.
5. Verify the result with `ls -la /target`.

TRAJECTORY:
assistant: Ensure the group exists, create /target, set its group and mode,
create link1 through link5, and verify the directory.
assistant: Act: bash
assistant: Act: finish
\end{CodeBox}

\par\medskip

\textbf{Explanation.}
This is a standard use of the \textbf{PCC}: the target task and the memory task are identical, and the retained successful trajectory has been compressed into a concise, idempotent, and verifiable operation template. It has a semantic similarity of 1 and maintains a high Q-value after subsequent feedback. The task is again completed in two steps in the tenth evaluation section, indicating stable reproduction of a verified solution rather than renewed exploration.

\par\medskip

\textbf{Target task} (Same task; retrieval rank 1; run outcome: correct).

\begin{CodeBox}
Create a directory '/target' containing symbolic links named 'link1' to
'link5' pointing to '/source/file1' to '/source/file5' respectively. Ensure
the directory is owned by group 'testgroup' with permissions 775.
\end{CodeBox}

\end{CaseBox}
\begin{CaseBox}{Case Study 2 (\textbf{NAC} --- Recent failures as warnings)}

\textbf{Coordinate:} \textbf{NAC}.

\par\medskip

\textbf{Origin task.}

\begin{CodeBox}
Search for all lines containing 'ERROR' in .log files within /var/log/app,
compile a summary report in /var/log/app/summary.txt with error counts per
file and list affected files, then move processed logs to
/var/log/app/archive.
\end{CodeBox}

\medskip

\textbf{Retrieved reflection memory.}

\begin{CodeBox}
- #1: [MEMORY TYPE] FAILURE_REFLECTION
[TASK]
Search for all lines containing 'ERROR' in .log files within /var/log/app,
compile a summary report in /var/log/app/summary.txt with error counts per
file and list affected files, then move processed logs to
/var/log/app/archive.

[REFLECTION]
- ROOT CAUSE: The assistant assumed that /var/log/app existed and was
  writable, and used a non-recursive glob over *.log. The task may also
  contain case-sensitive matching and output-format requirements that were
  not explicitly validated.
- PATTERN TO AVOID: Do not rely on a simple glob or on a successful-looking
  shell transcript. Check the directory, collect the intended files
  explicitly, and verify the exact report format and archive state.
- CORRECT APPROACH: Validate the input directory and permissions, use a
  controlled file list, count ERROR lines per file, write the required
  summary format exactly, move the processed logs, remove temporary files,
  and verify every evaluator-visible artifact.
\end{CodeBox}

\par\medskip

\textbf{Explanation.}
This case study demonstrates both the value and risk of the \textbf{NAC}. It immediately exposes the vulnerabilities of the most recent failed scenario and provides a counterexample for a similar task. However, its content has not been validated, and a bundle-level Q update shared across co-retrieved coordinates may assign a successful target-task outcome to this failure memory.

\par\medskip

\textbf{Target task} (retrieval rank 4; run outcome: correct).

\begin{CodeBox}
Process all error logs in /var/log/app: extract 'ERROR' lines into
errors\_summary.log, write the count to error\_count.txt, create a backup,
set permissions to 644, and move original logs into a backup directory.
\end{CodeBox}

\end{CaseBox}

\begin{CaseBox}{Case Study 3 (\textbf{NCC} --- Preserving diagnostically valuable failure experience)}

\textbf{Coordinate:} \textbf{NCC}.

\par\medskip

\textbf{Origin task.}

\begin{CodeBox}
Create a user 'devuser', create groups 'dev', 'qa', and 'ops', add the
user to all three groups, set the login shell to /bin/zsh, and ensure the
directory '/project' exists with group ownership 'dev' and permissions 770.
\end{CodeBox}

\medskip

\textbf{Retrieved reflection memory.}

\begin{CodeBox}
- #1: [MEMORY TYPE] FAILURE_REFLECTION
[TASK]
Create a user 'devuser', create groups 'dev', 'qa', and 'ops', add the
user to all three groups, set their login shell to /bin/zsh, and ensure the
directory '/project' exists with group ownership 'dev' and permissions 770.

[REFLECTION]
- ROOT CAUSE: The previous solution did not explicitly ensure that /bin/zsh
  was installed and listed in /etc/shells before calling chsh. It also used
  fallback logic that could silently mask user-creation or package-install
  failures.
- PATTERN TO AVOID: Do not assume that installing a shell makes it a valid
  login shell, and do not suppress errors from privileged setup commands.
- CORRECT APPROACH: Verify or install zsh, ensure /bin/zsh is present in
  /etc/shells, create or update the user and all groups idempotently, set the
  shell, configure /project as dev:dev with mode 770, and verify every state.
\end{CodeBox}

\par\medskip

\textbf{Explanation.}
This case demonstrates the transferability of the \textbf{NCC}. Origin task 289 discusses `devuser`, `qa`, `/project`, and `/bin/zsh`, while target task 103 discusses `appuser`, `admins`, `/var/app/app.log`, and `/bin/bash`. Although the task entities differ, they share the operational structure of user/group creation, shell modification, permission setting, and explicit verification.

\par\medskip

\textbf{Target task} (retrieval rank 1; run outcome: correct).

\begin{CodeBox}
Create groups 'dev', 'ops', and 'admins'; add user 'appuser' to all three
groups, set their login shell to /bin/bash, enforce password expiry in 30
days, create /var/app/app.log, make the log file group-readable, and make
the home directory group-accessible.
\end{CodeBox}

\end{CaseBox}
\begin{CaseBox}{Case Study 4 (\textbf{PAC} --- Retaining the first successful recovery after failure)}

\textbf{Coordinate:} \textbf{PAC}.

\par\medskip

\textbf{Origin task.}

\begin{CodeBox}
Create a group 'devteam', add users 'alice' and 'bob' to it, set 'alice' as
the group administrator, and configure a shared directory '/devteam_shared'
accessible only by the group.
\end{CodeBox}

\medskip

\textbf{Retrieved sucess memory.}

\begin{CodeBox}
- #1: [MEMORY TYPE] PROCEDURE  [COORDINATES: PCC + PAC]
Task: Create a group 'devteam', add users 'alice' and 'bob' to it, set 'alice' as the group administrator, and configure a shared directory '/devteam_shared' accessible only by the group.

SCRIPT:
### High-Level Script for Setting Up a Group-Administered Shared Directory

1. Ensure the group and both target users exist.
2. Add alice and bob to devteam and set alice as administrator with `gpasswd -A alice devteam`.
3. Create /devteam_shared, set its group to devteam, and use mode 770.
4. Verify membership, administrator assignment, group ownership, and mode.

TRAJECTORY:
assistant: Create the group and users, configure membership and the group administrator, create the 770 shared directory, and verify all conditions.
assistant: Act: bash
assistant: Act: finish
\end{CodeBox}

\par\medskip

\textbf{Explanation.}
The stored memory is a successful procedure. It occupies the \textbf{PAC}
because this trajectory was the first success after task 114's earlier
failure; the same trajectory may also occupy the PCC when it is the most
efficient successful representative. It is not a separate failure-reflection
record.

\par\medskip

\textbf{Target task} (Same task; retrieval rank 1; run outcome: correct).

\begin{CodeBox}
Create a group 'devteam', add users 'alice' and 'bob' to it, set 'alice' as
the group administrator, and configure a shared directory '/devteam_shared'
accessible only by the group.
\end{CodeBox}
\end{CaseBox}
\newpage
\section{Prompt Details}
We provide the exact prompt strings and message templates used by our RoMeRL
implementation for LifelongAgentBench and ALFWorld. For clarity, we separate
the prompts that summarize experiences into memories from those used at task
time for generation and inference. AppWorld follows the benchmark's native
application-API interface and official TGC/SGC evaluators.
\tcbset{boxsep=1.0mm}
\newtcolorbox{PromptBox}[1]{%
  enhanced,
  breakable,
  colback=white,
  colframe=black!55,
  colbacktitle=black!6,
  coltitle=black,
  fonttitle=\bfseries,
  title={#1},
  boxrule=0.5pt,
  arc=2pt,
  left=6pt,right=6pt,top=6pt,bottom=6pt,
}
\newtcblisting{PromptCode}{%
  listing engine=listings,
  listing only,
  breakable,
  colback=black!1,
  colframe=black!35,
  boxrule=0.4pt,
  arc=2pt,
  left=6pt,right=6pt,top=4pt,bottom=4pt,
  listing options={
    basicstyle=\ttfamily\small,
    columns=fullflexible,
    keepspaces=true,
    breaklines=true,
    breakatwhitespace=false,
    showstringspaces=false,
    numbers=none
  },
}
\subsection{Experience Summarization Prompts}
\label{app:memrl-prompts:experience-summarization}
\begin{PromptBox}{ALFWorld: Experience Summarization Prompts}
\textbf{Trajectory serialization (stored as the episode trajectory).}
\begin{PromptCode}
The full ALFWorld dialogue history is stored as the trajectory:
  List[{"role": ..., "content": ...}, ...]
When interpolated into the summarization prompts, it is treated as a string representation.
\end{PromptCode}
\medskip
\textbf{High-level script generation prompt.}
\begin{PromptCode}
Analyze the following detailed task trajectory and create a concise,
high-level script that captures the essential steps and decision points.

The script should be:
1. Generic enough to apply to similar tasks
2. Specific enough to provide useful guidance
3. 3-5 high-level steps maximum
4. Focus on the strategy and key decisions, not detailed actions

Trajectory:
{trajectory}

High-level script:
\end{PromptCode}
\medskip
\textbf{Failure reflection prompt.}
\begin{PromptCode}
Task: {task_description}

Failed trajectory:
{failed_trajectory}

This task failed. Analyze what went wrong and suggest improvements for future similar tasks.
Focus on:
1. Incorrect assumptions
2. Steps to improve
3. What to avoid next time

Provide a brief reflection:
\end{PromptCode}
\medskip
\textbf{Stored memory content templates.}
\begin{PromptCode}
# Successful memory
Task: {task_description}

SCRIPT:
{script}

TRAJECTORY:
{trajectory}

# Failure memory
TASK REFLECTION:
Task: {task_description}

What went wrong:
{reflection}

Failed approach:
{failed_trajectory}
\end{PromptCode}
\end{PromptBox}

\begin{PromptBox}{LLB (LifelongAgentBench): Experience Summarization Prompts}
\textbf{Trajectory serialization (stored as the episode trajectory).}
\begin{PromptCode}
{role_1}: {content_1}
{role_2}: {content_2}
...
\end{PromptCode}
\medskip
\textbf{High-level script generation prompt.}
\begin{PromptCode}
Analyze the following detailed task trajectory and create a concise,
high-level script that captures the essential steps and decision points.

The script should be:
1. Generic enough to apply to similar tasks
2. Specific enough to provide useful guidance
3. 3-5 high-level steps maximum
4. Focus on the strategy and key decisions, not detailed actions

Trajectory:
{trajectory}

High-level script:
\end{PromptCode}
\medskip
\textbf{Failure reflection prompt.}
\begin{PromptCode}
Task: {task_description}

Failed trajectory:
{failed_trajectory}

This task failed. Analyze what went wrong and suggest improvements for future similar tasks.
Focus on:
1. Incorrect assumptions
2. Steps to improve
3. What to avoid next time

Provide a brief reflection:
\end{PromptCode}
\medskip
\textbf{Stored memory content templates.}
\begin{PromptCode}
# Successful memory
Task: {task_description}

SCRIPT:
{script}

TRAJECTORY:
{trajectory}

# Failure memory
TASK REFLECTION:
Task: {task_description}

What went wrong:
{reflection}

Failed approach:
{failed_trajectory}
\end{PromptCode}
\end{PromptBox}
\subsection{Generation and Inference Prompts}
\label{app:memrl-prompts:generation-inference}
\begin{PromptBox}{ALFWorld: Generation and Inference Prompts}
\textbf{Base system prompt (ReAct format + action space).}
\begin{PromptCode}
Interact with a household to solve a task. Imagine you are an intelligent agent in a household environment and your target is to perform actions to complete the task goal. At the beginning of your interactions, you will be given the detailed description of the current environment and your goal to accomplish.
For each of your turn, you will be given the observation of the last turn. You should first think about the current condition and plan for your future actions, and then output your action in this turn. Your output must strictly follow this format:"Thought: your thoughts.\nAction: your next action".

The available actions are:
1. go to {recep}
2. take {obj} from {recep}
3. move {obj} to {recep}
4. open {recep}
5. close {recep}
6. use {obj}
7. clean {obj} with {recep}
8. heat {obj} with {recep}
9. cool {obj} with {recep}
where {obj} and {recep} correspond to objects and receptacles.
After your each turn, the environment will give you immediate feedback based on which you plan your next few steps. if the envrionment output "Nothing happened", that means the previous action is invalid and you should try more options.

Your response should use the following format:

Thought: <your thoughts>
Action: <your next action>
\end{PromptCode}
\medskip
\textbf{Retrieved memory injection (system message).}
\begin{PromptCode}
In addition to the example, you have the following memories from your own past experiences. Use them to help you if they are relevant:

--- SUCCESSFUL MEMORIES (Examples to follow) ---
{formatted_success_memories_joined}

--- FAILED MEMORIES (Examples to avoid or learn from) ---
{formatted_failed_memories_joined}
\end{PromptCode}
\medskip
\textbf{Current task prompt (user message).}
\begin{PromptCode}
Now, it's your turn to solve a new task.
{task_description}
\end{PromptCode}
\medskip
\textbf{Per-step observation prompt (user message).}
\begin{PromptCode}
Observation: {observation}
\end{PromptCode}
\medskip
\textbf{Message ordering (high level).}
\begin{PromptCode}
1) system: base ALFWorld system prompt
2) user/assistant: selected few-shot example dialogue (sequence of messages)
3) system: optional retrieved memory context
4) user: new task prompt
5) loop: append user Observation: ..., model replies with Thought/Action
\end{PromptCode}
\end{PromptBox}
\medskip

\medskip

\begin{PromptBox}{LLB (LifelongAgentBench): Generation and Inference Prompts}
\textbf{Base system prompt.}
\begin{PromptCode}
You are an execution-focused AI agent solving database and operating-system tasks.

You may receive a [Retrieved Memory Context] block with past experiences from similar problems.
These are **references for learning**, not guaranteed solutions:
- [MEMORY TYPE] SUCCESS_PROCEDURE: A successful approach from a similar task learn the pattern.
- [MEMORY TYPE] FAILURE_REFLECTION: A failed attempt with lessons avoid similar mistakes.

Use the memories as inspiration, but always analyze your current task independently and
adapt your approach based on its specific requirements.
\end{PromptCode}
\medskip
\textbf{Strict output constraint (DB tasks).}
\begin{PromptCode}
STRICT OUTPUT FORMAT (LLB:DB, do not violate):
1) After your reasoning, include exactly ONE action line:
   - Action: Operation
   - Action: Answer
2) If Action: Operation, put exactly ONE SQL statement in the FIRST fenced code block using ```sql, on a single line. Do not add any extra text after that block.
3) If Action: Answer, include `Final Answer: ...` on the next line and do not add extra text after that.
\end{PromptCode}
\medskip
\textbf{Strict output constraint (OS tasks).}
\begin{PromptCode}
STRICT OUTPUT FORMAT (LLB:OS, do not violate):
1) After your reasoning, include exactly ONE action line:
   - Act: bash
   - Act: finish
2) If Act: bash, the next lines MUST be a ```bash fenced code block with your Bash commands. Do not include any other code blocks.
3) If Act: finish, it must be the last line (no code blocks, no extra text).
4) Do NOT use `Action:` in OS tasks (use `Act:` only).
\end{PromptCode}
\medskip
\textbf{Retrieved memory injection block.}
\begin{PromptCode}
[Retrieved Memory Context]

=== SUCCESSFUL EXPERIENCES (Learn from these) ===
[SUCCESS 1] [TYPE: {mem_type}]
{content}

=== FAILED EXPERIENCES (Avoid these mistakes) ===
[FAILURE 1] [TYPE: {mem_type}]
{content}
\end{PromptCode}
\medskip
\textbf{Prompt assembly ordering (system prompt).}
\begin{PromptCode}
1) base system prompt
2) optional [Retrieved Memory Context] ...
3) strict output format block appended at the end (task-aligned)
\end{PromptCode}
\end{PromptBox}

\end{document}